\algrenewcommand{\algorithmiccomment}[1]{{\footnotesize\color{black!40!white}\hfill$\triangleright$ #1}}
\newtheorem{problem}{Problem}
\newcommand{\tool}{\textsf{Clemont}\xspace}
\newcommand{\ior}{i.o.r.\xspace}
\newcommand{\Ior}{I.o.r.\xspace}
\newcommand{\IOR}{I.O.R.\xspace}
\newcommand{\RN}{\mathbb{R}}
\newcommand{\NN}{\mathbb{N}}
\newcommand{\bigO}{\mathcal{O}}
\begin{document}

%%
%% The "title" command has an optional parameter,
%% allowing the author to define a "short title" to be used in page headers.
\title{Monitoring Robustness and Individual Fairness}

%%
%% The "author" command and its associated commands are used to define
%% the authors and their affiliations.
%% Of note is the shared affiliation of the first two authors, and the
%% "authornote" and "authornotemark" commands
%% used to denote shared contribution to the research.
% \author{Ben Trovato}
% \authornote{Both authors contributed equally to this research.}
% \email{trovato@corporation.com}
% \orcid{1234-5678-9012}
% \author{G.K.M. Tobin}
% \authornotemark[1]
% \email{webmaster@marysville-ohio.com}
% \affiliation{%
%   \institution{Institute for Clarity in Documentation}
%   \city{Dublin}
%   \state{Ohio}
%   \country{USA}
% }

% \author{Lars Th{\o}rv{\"a}ld}
% \affiliation{%
%   \institution{The Th{\o}rv{\"a}ld Group}
%   \city{Hekla}
%   \country{Iceland}}
% \email{larst@affiliation.org}

 \author{Ashutosh Gupta}
 \affiliation{%
   \institution{IIT Bombay}
   \city{Mumbai}
   \country{India}}
 \email{akg@iitb.ac.in}
 
 \author{Thomas A.\ Henzinger}
 \affiliation{%
   \institution{ISTA}
   \city{Klosterneuburg}
   \country{Austria}}
 \email{tah@ist.ac.at}
 
 \author{Konstantin Kueffner}
 \affiliation{%
   \institution{ISTA}
   \city{Klosterneuburg}
   \country{Austria}}
 \email{konstantin.kueffner@ist.ac.at}
 
 \author{Kaushik Mallik}
 \affiliation{%
   \institution{IMDEA Software Institute}
   \city{Madrid}
   \country{Spain}}
 \email{kaushik.mallik@imdea.org}
 
 \author{David Pape}
 \affiliation{%
   \institution{Paris Lodron Universit\"at Salzburg}
   \city{Salzburg}
   \country{Austria}}
 \email{david.pape@stud.plus.ac.at}

% \author{Valerie B\'eranger}
% \affiliation{%
%   \institution{Inria Paris-Rocquencourt}
%   \city{Rocquencourt}
%   \country{France}
% }

% \author{Aparna Patel}
% \affiliation{%
%  \institution{Rajiv Gandhi University}
%  \city{Doimukh}
%  \state{Arunachal Pradesh}
%  \country{India}}

% \author{Huifen Chan}
% \affiliation{%
%   \institution{Tsinghua University}
%   \city{Haidian Qu}
%   \state{Beijing Shi}
%   \country{China}}

% \author{Charles Palmer}
% \affiliation{%
%   \institution{Palmer Research Laboratories}
%   \city{San Antonio}
%   \state{Texas}
%   \country{USA}}
% \email{cpalmer@prl.com}

% \author{John Smith}
% \affiliation{%
%   \institution{The Th{\o}rv{\"a}ld Group}
%   \city{Hekla}
%   \country{Iceland}}
% \email{jsmith@affiliation.org}

% \author{Julius P. Kumquat}
% \affiliation{%
%   \institution{The Kumquat Consortium}
%   \city{New York}
%   \country{USA}}
% \email{jpkumquat@consortium.net}

%%
%% By default, the full list of authors will be used in the page
%% headers. Often, this list is too long, and will overlap
%% other information printed in the page headers. This command allows
%% the author to define a more concise list
%% of authors' names for this purpose.
% \renewcommand{\shortauthors}{Henzinger et al.}

%%
%% The abstract is a short summary of the work to be presented in the
%% article.
\begin{abstract}
 In automated decision-making, it is desirable that outputs of decision-makers be robust to slight perturbations in their inputs, a property that may be called \emph{input-output robustness}.
    Input-output robustness appears in various different forms in the literature, such as robustness of AI models to adversarial or semantic perturbations and individual fairness of AI models that make decisions about humans.
    We propose runtime monitoring of input-output robustness of deployed, black-box AI models, where the goal is to design monitors that would observe one long execution sequence of the model, and would raise an alarm whenever it is detected that two similar inputs from the past led to dissimilar outputs. 
    This way, monitoring will complement existing offline ``robustification'' approaches to increase the trustworthiness of AI decision-makers.
    We show that the monitoring problem can be cast as the fixed-radius nearest neighbor (FRNN) search problem, which, despite being well-studied, lacks suitable online solutions.
    We present our tool \tool\footnote{\href{https://github.com/ariez-xyz/clemont}{https://github.com/ariez-xyz/clemont}}, which offers a number of lightweight monitors, some of which use upgraded online variants of existing FRNN algorithms, and one uses a novel algorithm based on binary decision diagrams---a data-structure commonly used in software and hardware verification.
    We have also developed an efficient parallelization technique that can substantially cut down the computation time of monitors for which the distance between input-output pairs is measured using the $L_\infty$ norm.
    % \todo{Placeholder name ``\tool,'' which stands for Monitor for Input-Output RobustnEss.}
    Using standard benchmarks from the literature of  adversarial and semantic robustness and individual fairness, we perform a comparative study of different monitors in \tool, and demonstrate their effectiveness in correctly detecting robustness violations at runtime.
\end{abstract}

%%
%% The code below is generated by the tool at http://dl.acm.org/ccs.cfm.
%% Please copy and paste the code instead of the example below.
%%
% \begin{CCSXML}
% <ccs2012>
%  <concept>
%   <concept_id>00000000.0000000.0000000</concept_id>
%   <concept_desc>Do Not Use This Code, Generate the Correct Terms for Your Paper</concept_desc>
%   <concept_significance>500</concept_significance>
%  </concept>
%  <concept>
%   <concept_id>00000000.00000000.00000000</concept_id>
%   <concept_desc>Do Not Use This Code, Generate the Correct Terms for Your Paper</concept_desc>
%   <concept_significance>300</concept_significance>
%  </concept>
%  <concept>
%   <concept_id>00000000.00000000.00000000</concept_id>
%   <concept_desc>Do Not Use This Code, Generate the Correct Terms for Your Paper</concept_desc>
%   <concept_significance>100</concept_significance>
%  </concept>
%  <concept>
%   <concept_id>00000000.00000000.00000000</concept_id>
%   <concept_desc>Do Not Use This Code, Generate the Correct Terms for Your Paper</concept_desc>
%   <concept_significance>100</concept_significance>
%  </concept>
% </ccs2012>
% \end{CCSXML}

% \ccsdesc[500]{Do Not Use This Code~Generate the Correct Terms for Your Paper}
% \ccsdesc[300]{Do Not Use This Code~Generate the Correct Terms for Your Paper}
% \ccsdesc{Do Not Use This Code~Generate the Correct Terms for Your Paper}
% \ccsdesc[100]{Do Not Use This Code~Generate the Correct Terms for Your Paper}

%%
%% Keywords. The author(s) should pick words that accurately describe
%% the work being presented. Separate the keywords with commas.
\keywords{Monitoring, individual fairness, adversarial robustness, semantic robustness, fixed-radius nearest neighbor search, trustworthy AI}
%% A "teaser" image appears between the author and affiliation
%% information and the body of the document, and typically spans the
%% page.
% \begin{teaserfigure}
%   \includegraphics[width=\textwidth]{sampleteaser}
%   \caption{Seattle Mariners at Spring Training, 2010.}
%   \Description{Enjoying the baseball game from the third-base
%   seats. Ichiro Suzuki preparing to bat.}
%   \label{fig:teaser}
% \end{teaserfigure}
%%
%% This command processes the author and affiliation and title
%% information and builds the first part of the formatted document.
\maketitle

\section{Introduction}
AI decision-makers are being increasingly used for making critical decisions in a wide range of areas, including banking~\cite{rahman2023adoption,kaya2019artificial}, hiring~\cite{li2021algorithmic}, object recognition~\cite{serban2020adversarial}, and autonomous driving~\cite{chen2024end,yurtsever2020survey}. 
It is therefore crucial that they are reliable and trustworthy.
One of the general yardsticks of reliability is (global) \emph{input-output robustness}, which stipulates that similar inputs to the given AI model must lead to similar outputs.
This subsumes a number of widely used metrics, namely \emph{adversarial robustness} of image classifiers~\cite{mangal2019robustness}, requiring images that are pixel-wise similar be assigned similar labels, \emph{semantic robustness} of image classifiers~\cite{croce2020robustbench}, requiring images that capture similar semantic objects are assigned similar labels, and \emph{individual fairness} of human-centric decision-makers~\cite{dwork2012fairness,ruoss2020learning}, requiring individuals with similar features receive similar treatments.

Currently, input-output robustness of AI models is evaluated \emph{offline}, i.e., before seeing the actual inputs to be encountered during the deployment~\cite{dwork2012fairness,leino2021globally}, and it is required that the model be robust either with high probability with respect to a given input data distribution---the \emph{probabilistic} setting, or against all possible inputs---the \emph{worst-case} setting.
In practice, these offline robustness requirements of AI models are impossible to achieve due to various reasons.
For example, probabilistic robustness is problematic under data distribution shifts~\cite{taori2020measuring}, and worst-case robustness is tricky in classification tasks due to output transitions near class boundaries~\cite{kabaha2024verification}.

We propose a practical, \emph{runtime} variant of input-output robustness where robustness needs to be achieved on specific (finite) runs of deployed models, and a given \emph{run} violates robustness if two similar inputs from the past produced dissimilar outputs.
It is easy to see that (worst-case) offline input-output robustness implies runtime input-output robustness, but not the other way round: an AI model that is unrobust in the offline setting can still produce robust runs if the unrobust input pairs do not appear in practice.
Naturally, runtime input-output robustness is immune to data distribution shifts, and remains unaffected by class boundaries if inputs near the boundaries do not appear at runtime.
This way, runtime input-output robustness accounts for only those inputs that matter.

\begin{figure}
    \centering
    \begin{tikzpicture}
        \draw   (0,0)   rectangle   (2.5,1)   node[pos=0.5,align=center]  {AI\\ Decision-Maker};
        \draw   (-1.8,2)   rectangle   (3.3,4.5);
        
        \draw   (-1.6,2.1)  rectangle   (1.5,4) node[pos=0.5]    {\includegraphics[scale=0.15]{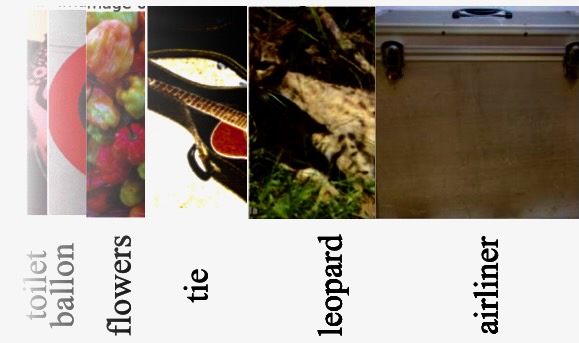}};
        \node   at  (-0.25,4.25)    {input-output history};
        
        \draw   (2,2.4)   rectangle   (3,3.9)  node[pos=0.5,align=center]   {FRNN\\ algo.};
        \draw[->]   (1.5,3)   --  (2,3);
        \draw[<-]   (1.5,3.3)   --  (2,3.3);
        \node   at  (2.1,4.7)   {\emph{Monitor}};

        \node   (in)   at  (-1.2,0.5) {\includegraphics[scale=0.1]{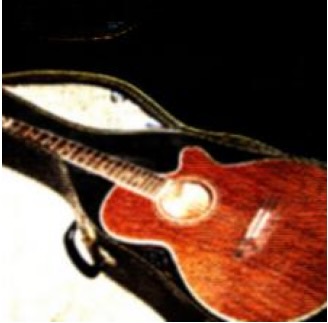}};
         \node  (in_txt)  at   (-1.2,1.3)     {new input};
        \draw[->]   (in)   --  (0,0.5);
        \draw[->]   (2.5,0.5)   --  (3.5,0.5);   \node  (out_txt)   at  (4,0.5)    {\textit{guitar}};

        \draw[->]   (in_txt.north)  --  (-1.2,1.8)  --  (2.3,1.8)   --  (2.3,2.4);
        \draw[->]   (out_txt.north) --  (4,1.6) --  (2.7,1.6)   --  (2.7,2.4);

        \draw[->]   (3,3.25)  --  (3.5,3.25);
        \node[align=center]   at  (5.1,3.25)  {\textbf{Violation!!!}\\ \\
                                                    \begin{tabular}{m{0.6cm} m{0.5cm} m{1.2cm}}
                                                        earlier: & \includegraphics[scale=0.06]{figures/illustrations/illustration_monitor_observation} & $\mapsto$ \texttt{tie}  \\
                                                        now: & \includegraphics[scale=0.06]{figures/illustrations/illustration_monitor_observation} & $\mapsto$ \textit{guitar}
                                                    \end{tabular}};
        
    \end{tikzpicture}
    \caption{Schematic diagram of input-output robustness monitors. The monitor stores the history of seen input-output pairs, and after arrival of each new input and the respective output of the AI decision-maker, uses a fixed-radius nearest neighbor (FRNN) search algorithm to check if any ``close'' input from the past gave rise to a ``distant'' output. The predictions in the figure are from the AlexNet~\cite{krizhevsky2012imagenet} model.}
    \label{fig:schematic diagram:monitor}
\end{figure}

We propose \emph{monitoring} of runtime input-output robustness.
The objective is to design algorithms---or monitors---which would observe one long input-output sequence of a given black-box decision-maker, and, after each new observation, would raise an alarm if runtime robustness has been violated by the current run.
In addition, after detecting a violation, the monitor would present a \emph{witness set}, which is the set of every past input that is similar to the current input but produced a dissimilar output.
The witness set can then be scrutinized by human experts, and measures can be taken if needed.

Monitoring has been extensively used to improve trustworthiness of software systems in other areas of computer science, including safety assurance in embedded systems~\cite{bartocci2018lectures} and bias mitigation in human-centric AI~\cite{henzinger2023monitoring,henzinger2023runtime,albarghouthi2019fairness}.
Like in these applications, monitoring is meant to \emph{complement}---and not replace---the existing offline measures of input-output robustness.
In fact, our experiments empirically show that offline robustness improves runtime robustness by a significant margin.
Yet, AI models that were designed using state-of-the-art offline robust algorithms still showed considerable runtime robustness violations.
Without monitoring, these violations would go undetected.
Some offline algorithms \emph{verify} the absence of robustness violations of trained models using formal methods inspired approaches~\cite{khedr2023certifair,meng2022adversarial}, which usually do not scale for large and complex models. 
As monitors treat the monitored systems as black-boxes, their performances remain unaffected by model complexities, making them essential tools when no verification approach would scale.
In fact, our monitors are shown to scale for examples up to more than 100,000 feature dimensions and for neural networks with more than 350 million parameter.
Such systems are beyond the reach of static verification approaches.

We show that the algorithmic problem of monitoring input-output robustness boils down to solving the well-known fixed-radius nearest neighbor (FRNN) search problem at each step of seeing a new input-output pair, as illustrated in Figure~\ref{fig:schematic diagram:monitor}.
In FRNN, we are given a point $p$, a set of points $S$, and a constant $\epsilon>0$, where $p$ and $S$ belong to the same metric space, and the objective is to compute the set $S'\subseteq S$ which is the set of all points that are at most $\epsilon$-far from $p$.
For our monitoring problem, $S$ and $p$ are respectively the past and current input-output pairs at any given point, and the underlying metric space is designed in a way that two points are close to each other if they correspond to the violation of robustness.

Even though FRNN has been studied extensively, most existing algorithms consider the \emph{static} setting where $S$ remains fixed.
Usually, the static FRNN algorithms from the literature are concerned with building fast \emph{indexing schemes} for $S$, such that the process of computing $S'$ is efficient.
For monitoring, we need the \emph{dynamic} variant, where $S$ is growing with incoming input-output pairs, and $p$ is the current input-output pair.
The na\"ive approach to go from the static to the dynamic setting would be to recompute the index at each step after the latest point is added to $S$, but this will cause a substantial computational overhead in practice.

Our contributions are as follows.
\begin{itemize}
    \item We present a practical solution for upgrading existing static FRNN algorithms to the dynamic setting through periodic recomputation of indices. 
    \item We present a new dynamic FRNN algorithm based on the symbolic data structure called binary decision diagram (BDD) used in hardware and software verification.
    \item We present a parallelized FRNN algorithm that substantially boosts the computational performances of our monitors.
    \item We implemented our monitors in the tool CLEMONT, and show that it can detect violations within fraction of a second to a few seconds (per decision) for real-world models with
more than 350M parameter and 150.5k input dimensions.
\end{itemize}

\section{Related Work}

The property of robustness has been studied across various domains in computer science, most prominently, automata theory and AI. 
Robustness in automata theory appears in the study of transducers~\cite{henzinger2014lipschitz}, I/O-systems~\cite{henzinger2014lipschitz}, and sequential circuits~\cite{doyen2010robustness}. The notion of robustness used are structurally similar to the ones used in AI and include $\epsilon$-robustness, $(\epsilon, \delta)$-robustness, or Lipschitz robustness \cite{casadio2022neural}.
In AI we are interested in the robustness of a single model. Here we differentiate between local or global robustness~\cite{mangal2019robustness}, which differ by the domain where the robustness requirements must hold. Depending on applications, the existing robustness definitions go by names like semantic robustness~\cite{croce2020robustbench}, adversarial robustness~\cite{mangal2019robustness}, or individual fairness~\cite{dwork2012fairness,ruoss2020learning}.

The two general approaches ensuring the robustness of machine learning models are training \cite{bai2021recent} and verification \cite{meng2022adversarial}.
Training robust models is done using techniques such as regularization, curriculum learning, or ensemble learning \cite{bai2021recent}.
Those techniques often fail to provide strong robustness guarantees and those that do, mostly do so in expectation \cite{lahoti2019ifair,ruoss2020learning,benussi2022individual}. 
Verifying models for robustness is done using tool such as SMT solvers \cite{katz2017reluplex,katz2019marabou}, abstract interpretation \cite{gehr2018ai2}, mixed-integer programming \cite{tjeng2017evaluating}, or branch-and-bound \cite{wang2021beta}. A verified model is guaranteed to satisfy either local \cite{benussi2022individual} and global robustness \cite{john2020verifying,biswas2023fairify,yeom2020individual,urban2020perfectly}.
Those strong guarantees come at the cost of high computation time. In particular, they do not scale as the complexity of the classifier increases, and usually fail for networks with more than 1000 neurons \cite{meng2022adversarial,benussi2022individual,biswas2023fairify,elboher2020abstraction}.
% \KM{Can we give some idea of how many neurons can be support according to their own experiments? is it below 10, below 100, below 1000?}

% We propose monitoring as a complementary approach to ensure robustness of decision makers. 
Monitoring is a well-established topic in runtime verification of hardware, software, and cyber-physical systems~\cite{bartocci2018lectures}. 
Recently, monitoring has been extended to verify \emph{group} fairness properties of deployed AI decision-makers~\cite{henzinger2023runtime,henzinger2023monitoring,henzinger2023pmonitoring,albarghouthi2019fairness}, though monitoring individual fairness (an instance of \ior ) properties has appeared rarely~\cite{albarghouthi2019fairness}.
The difference between monitoring group fairness and individual fairness is in the past information that needs to be kept stored: while individual fairness (and \ior by extension) needs all decisions from the past, for group fairness, the explicit past decisions can be discarded and only some small statistics about them needs to be kept~\cite{henzinger2023runtime,henzinger2023monitoring,henzinger2023pmonitoring,albarghouthi2019fairness}.
The only known work on monitoring individual fairness proposes a simple solution similar to our brute-force monitor~\cite{albarghouthi2019fairness}, which we show to not suffice in many benchmarks.

We will see that some of our monitors build upon existing FRNN search algorithms, a survey of which is deferred to Section~\ref{sec:survey}.

\section{The Monitoring Problem}

\subsection{Input-Output Robustness (\IOR)}
We consider input-output robustness of AI classifiers, though our formulation can be easily extended for regression models.
AI \emph{classifiers} are modeled as functions of the form $D\colon X\to Z$, where $X$ is the \emph{input} space and $Z$  is the \emph{output} space, with the respective distance metrics $d_X$ and $d_Z$.
Each  $(x,D(x))$ pair will be referred to as a \emph{decision} of $D$.
\emph{Input-output robustness}, or \ior in short, of $D$ requires that a small difference in inputs must not result in a large difference in outputs.
\begin{definition}\label{def:i/o robustness}
    Let $D$ be a classifier.
    For given constants $\epsilon_X,\delta_Z > 0$ and two inputs $x, x' \in X$, the classifier $D$ is $(\epsilon_X, \delta_Z)$-\emph{input-output robust}, or $(\epsilon_X, \delta_Z)$-\ior\footnote{The acronym ``\ior'' will represent both the noun ``input-output robustness'' and the adjective ``input-output robust.''} in short, for $x$ and $x'$ if:
\begin{align}\label{eq:i/o robustness}
  d_X(x,x')\leq \epsilon_X \implies d_Z(D(x), D(x')) \leq \delta_Z.
\end{align}
\end{definition}
% for every pair of inputs $x,x'\in X$, $d_X(x,x')\leq \epsilon_X$ implies $d_Z(D(x),D(x'))\leq \epsilon_Z$, 
We drop the constants ``$\epsilon_X$'' and ``$\delta_Z$'' if irrelevant or unambiguous.
% will simply write ``\ior'' if the constants ``$\epsilon_X$'' and ``$\delta_Z$'' are irrelevant or unambiguous.
Usually, \ior is not defined with a pair of fixed inputs like in Definition~\ref{def:i/o robustness}, but rather as local or global requirements on the system, and these local and global variants can be retrieved from our definition of \ior as follows.
% The two existing robustness definitions, namely local and global robustness, can be retrieved from \ior~\cite{mangal2019robustness} as follows. \KM{what is this reference here? Do they also have definition of \ior?}
% \KM{what is this reference ? It was in the previous version of the sentence. Does it also have a definition of \ior?}
The classifier $D$ satisfies \emph{local} \ior with respect to a given input $x$, if for every $x'\in X$, $D$ is \ior for $x$ and $x'$, and $D$ satisfies \emph{global} \ior, if for every pair of inputs $x,x'\in X$, $D$ is \ior for $x,x'$ \cite{leino2021globally,casadio2022neural,john2020verifying}.
% local robustness, if $D$ is input-output robust for a given point $x\in X$ and every point $x'\in X$; 
% global robustness if $D$ is input-output robust for all points $x,x' \in X$.
% In the literature, the robustness of a classifier is defined either locally or globally. 
\Ior appears in different forms in the literature, which are reviewed below.

% is either define

% In the literature, the property \eqref{eq:i/o robustness} is sometimes called \emph{global} robustness, whereas \emph{local} robustness is defined with respect to a given fixed input $x$, and the implication in \eqref{eq:i/o robustness} needs to hold for every $x'\in X$ \cite{mangal2019robustness}.
% We will say the decision $(x',D(x'))$ \emph{violates input-output robustness with respect to} another decision $(x,D(x))$ if \eqref{eq:i/o robustness} is violated, i.e., $d_X(x,x') \leq \epsilon_X$ but $D(x)\neq D(x')$.

% We first show that input-output robustness unifies adversarial robustness, semantic robustness, and individual fairness.

\begin{description}
    \item[Adversarial robustness.] The definition of adversarial robustness~\cite{croce2020robustbench} exactly mirrors \ior in \eqref{eq:i/o robustness} with $X$ usually being real-coordinate spaces with either $L_2$ or $L_\infty$ norm.
    % A decision-maker is \emph{adversarially robust} if a small bounded change in the input only causes a small bounded change in the output~\cite{?}.
    % Usually, adversarial robustness is defined on real-coordinate spaces with $L_2$ or $L_\infty$ norm, and therefore reduces to input-output robustness by using the suitable $D$, $d_X$, and $d_Z$, and $\epsilon_X,\epsilon_Z$ are assumed to be  specified. %, i.e., $d_X(x,x')=\|x-x' \|$ and $d_Z(z,z')=\|z-z'\|$.
    %
    \item[Semantic robustness.] 
    A decision-maker is \emph{semantically robust} if a small semantic change in its input does not significantly change the output~\cite{croce2020robustbench}, where two input images or input texts are semantically close if their semantic meanings are similar, although the distance between their feature values can be large.
    For measuring semantic robustness of the classifier $D\colon X\to Z$, we use a separate AI model $S$ that maps every input $x\in X$ to a point $y$ in an intermediate lower-dimensional semantic embedding space $Y$.
    Two inputs $x,x'\in X$ are then semantically close if $S(x)$ and $S(x')$ are close to each other according to a given distance metric.
    Usually, $X$ and $Y$ are real-coordinate spaces with either $L_2$ or $L_\infty$ norms, and therefore semantic robustness reduces to \ior by defining $d_X(x,x') = \|S(x)-S(x')\|$ with the respective norm $\|\cdot \|$, and $\epsilon_X$ is assumed to be  specified.
    %
    % \item[Individual fairness.] Individual fairness is usually described for boolean classifiers $D\colon X\to \{0,1\}$ making decisions about humans, and requires that individuals with similar features produce the same output~\cite{?}.
    % Typically, the input feature space $X$ of individuals is composed of real-valued (like salary) and categorical (like gender) dimensions, and the distance metric $d_X$ and the distance $\epsilon_X$ are given.
    % Clearly, this is an instance of input-output robustness.
    \item[Individual fairness.]
    Individual fairness is a global robustness property defined to assess the fairness of classifiers making decisions about humans. Among many alternate definitions \cite{john2020verifying,dwork2012fairness,ruoss2020learning,khedr2023certifair}, we use the one of Biswas et.al.~\cite{biswas2023fairify}. 
\end{description}

\subsection{The New \emph{Runtime} Variant}

% \smallskip
% \noindent\textbf{Runtime robustness}

The existing local and global variants of \ior are \emph{offline} properties of classifiers, meaning they are evaluated before observing the actual inputs seen at runtime.
In practice, a classifier that is \emph{not} locally or globally \ior may still be acceptable, as long as the pairs of inputs that witness the unrobust behaviors do not appear at runtime.
This motivates us to introduce the third, \emph{runtime} variant of \ior, which is a property of a given decision sequence, and \emph{not} a property of the underlying classifier.
Here, a decision sequence of the classifier $D\colon X\to Z$ is any finite input-output sequence $(x_1,z_1),\ldots,(x_n,z_n)\in (X\times Z)^n$, for any $n>0$, such that for every $i\in [1;n]$, $D(x_i)=z_i$.
\begin{definition}\label{def:runtime i/o robustness}
    Let $D$ be a classifier and let $\epsilon_X,\delta_Z >0$. A decision sequence $(x_1,z_1)\ldots (x_n,z_n)$ of $D$ is \emph{runtime $(\epsilon_X, \delta_Z)$-\ior} if 
    \begin{align}\label{eq:runtime robustness}
   \forall i,j \in [1;n] \;.\; d_X(x_i,x_j)\leq \epsilon_X \implies  d_Z(D(x_i), D(x_j)) \leq \delta_Z.
\end{align}
\end{definition}
% Like before, we drop the constants ``$\epsilon_X$'' and ``$\delta_Z$'' if irrelevant or unambiguous.
It is straightforward to show that runtime \ior is \emph{weaker} than global \ior:

\begin{theorem}\label{thm:runtime input-output robustness}
    Suppose $\epsilon_X, \delta_Z>0$ are constants and $X$ is infinite. 
    \begin{enumerate}
        \item Every decision-sequence of every globally $(\epsilon_X, \delta_Z)$-\ior classifier is runtime $(\epsilon_X, \delta_Z)$-\ior.
        \item If the decision-sequence of a classifier is runtime $(\epsilon_X, \delta_Z)$-\ior, the classifier is not necessarily globally $(\epsilon_X, \delta_Z)$-\ior.
        % \item A runtime $(\epsilon_X, \delta_Z)$-\ior decision sequence may either belong to a classifier that is globally $(\epsilon_X, \delta_Z)$-\ior, or belong to a classifier that is not globally $(\epsilon_X, \delta_Z)$-\ior
    \end{enumerate}
\end{theorem}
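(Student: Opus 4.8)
The plan is to treat the two parts separately; the first is an immediate consequence of the definitions, while the second requires a counterexample that exploits the infiniteness of $X$. For part (1), I would simply observe that global $(\epsilon_X, \delta_Z)$-\ior quantifies the implication in \eqref{eq:i/o robustness} over \emph{all} pairs $x, x' \in X$. Given any decision sequence $(x_1, z_1)\ldots(x_n, z_n)$ of a globally \ior classifier $D$, each pair $(x_i, x_j)$ with $i, j \in [1;n]$ is in particular a pair in $X \times X$, so $d_X(x_i, x_j) \leq \epsilon_X \implies d_Z(D(x_i), D(x_j)) \leq \delta_Z$ holds by instantiation. As this is exactly condition \eqref{eq:runtime robustness}, the sequence is runtime \ior. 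This step is routine.

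For part (2), I would construct a single classifier $D$ together with one decision sequence witnessing that runtime \ior does not force global \ior. The guiding idea is that a decision sequence is always finite whereas $X$ is infinite, so a classifier may violate global \ior on a pair of inputs that never occurs in the chosen sequence. Concretely, I would take $X = \RN$ with $d_X(x, x') = |x - x'|$ and an output space $Z = \{0, 1\}$ under the discrete metric (so $d_Z(0,1) = 1$), choosing the constants so that $\delta_Z < 1$, which makes outputs $0$ and $1$ dissimilar. Define the step classifier $D(x) = 0$ for $x < 0$ and $D(x) = 1$ for $x \geq 0$. Then $D$ is not globally \ior, since the pair $x = -\epsilon_X/2$ and $x' = \epsilon_X/2$ satisfies $d_X(x, x') = \epsilon_X$ but $d_Z(D(x), D(x')) = 1 > \delta_Z$. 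It then remains to exhibit a runtime \ior decision sequence of this same $D$: the cleanest choice is a length-one sequence, e.g.\ $(x_1, z_1) = (1, 1)$, whose only index pair is $i = j = 1$ with $d_Z(D(x_1), D(x_1)) = 0 \leq \delta_Z$; hence it is runtime \ior while $D$ is not globally \ior. If a richer sequence is desired, any finite set of inputs drawn entirely from one side of the discontinuity yields constant outputs and is vacuously runtime \ior.

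The only point requiring care is the role of the hypothesis that $X$ is infinite, and I would make it explicit rather than leave it implicit: infiniteness is precisely what lets a finite decision sequence ``miss'' the input pair witnessing the global violation. I would remark that if $X$ were finite, a decision sequence could enumerate all of $X$ and thereby force runtime \ior to coincide with global \ior, so infiniteness is exactly what makes the separation in part (2) possible. No step presents a genuine obstacle; the main task is simply to fix a concrete metric space and classifier on which both properties are trivially verifiable.
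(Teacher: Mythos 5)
Your part (1) is essentially identical to the paper's: both instantiate the global quantification over $X\times X$ at the finitely many pairs occurring in the sequence. For part (2), however, you take a genuinely different route. The paper argues generically: given \emph{any} runtime $(\epsilon_X,\delta_Z)$-\ior decision sequence $\rho$ of \emph{any} classifier $D$, it picks two fresh inputs $x,x'\in X\setminus\{x_1,\dots,x_n\}$ with $d_X(x,x')\leq\epsilon_X$ (this is where infiniteness of $X$ enters) and modifies $D$ only on those two points to obtain a classifier $D'$ that still generates $\rho$ but violates global \ior. That construction buys a stronger moral than the bare statement: \emph{every} runtime \ior sequence is consistent with some non-globally-\ior classifier, so observing a robust run can never certify global robustness. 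Your argument instead exhibits a single concrete witness (a step function on $\RN$ with a length-one decision sequence), which is easier to verify and fully sufficient for the existential claim ``not necessarily.'' Two small repairs to your write-up: the constants $\epsilon_X,\delta_Z$ are given in the theorem, so you cannot ``choose'' $\delta_Z<1$; instead rescale the output metric, e.g.\ set $d_Z(0,1)=2\delta_Z$, and your example works for all constants. Also, both your remark on infiniteness and the paper's proof implicitly require that $X$ contain distinct $\epsilon_X$-close points outside the observed sequence---infiniteness alone does not guarantee this in an arbitrary metric space (consider $\NN$ with the usual metric and $\epsilon_X<1$, where every classifier is vacuously globally \ior); your explicit choice $X=\RN$ sidesteps this issue, whereas the paper leaves it implicit.
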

% \begin{proof}
%     Claim (1): Let $D$ be an $(\epsilon_X, \delta_Z)$-\ior classifier and let $x_1, \dots, x_n \in X^n$ be a sequence of inputs. We know that \ior is satisfied for every two states in $X$. Hence, it must also be satisfied for $\{x_1, \dots, x_n\}\subseteq X$. 
%     Claim (2): Let $\rho=(x_1, z_1), \dots, (x_n,z_n) \in X^n$ be a runtime $(\epsilon_X, \delta_Z)$-\ior decision sequence. Let $D$ be a classifier generating $\rho$. Let $x,x'\in X$ such that $d_x(x,x')\leq \epsilon_X \setminus \{x_1, \dots, x_n\}$. We define $D'$ identical to $D$, but for $x$ and $x'$ where we set $d_Z(D(x), D(x'))\geq \delta_Z$, i.e., $D'$ is not globally $(\epsilon_X, \delta_Z)$-\ior. 
%     % Let $x_1, \dots, x_n$ be a decision sequence. If the classifier $D$ satisfies global robustness, then it 
%     % follows directly from the definition of global robustness, i.e., if input-output robustness holds for every state in $X$ then it must hold for the subset $X'$ 
% \end{proof}
The proof is in Appendix~\ref{sec:proof}.
Claim~(1) of Theorem~\ref{thm:runtime input-output robustness} implies that if a given decision sequence of a classifier is \emph{not} runtime \ior, then the classifier is surely \emph{not} globally \ior
On the other hand, Claim~(2) suggests that if the decision sequence \emph{is} runtime \ior, we will not be able to conclude whether the classifier is globally \ior or not.
Furthermore, from Definition~\ref{def:runtime i/o robustness}, as soon as a decision sequence violates runtime \ior, so will every future extension of the sequence, regardless of the decisions that will be made in future.

% \KM{If ``input-output robustness'' is being used too many times, we can shorten it to ``i.o.r.'' However, I would avoid calling it just ``robustness'' to make sure that someone does not confuse it with adversarial or semantic robustness.}

% \KM{Konstantin's version:}
% Local and global robustness are conditions on the model itself. Especially global robustness is a strong condition, in some cases too strong. 
% Because as long as there are no violation of input-output robustness on the actual sequence of decisions, the classifiers $D$ behavior is indistinguishable from that of a globally robust classifier $D'$. 
% Formally, two classifier $D,D'\colon X \to Z$ are indistinguishable on a input sequence $x_1, \dots, x_n \in X^n$ if the respective decision histories are identical, i.e., $(x_1,z_1)\ldots (x_n,z_n) = (x_1,z_1')\ldots (x_n,z_n')$.
% Motivated by this, we propose runtime robustness as a history-based alternative to global robustness. It requires that input-output robustness is not violated on a given sequence of decisions.

% \begin{theorem}
%     Let $x_1, \dots, x_n$ be a sequence of inputs. Let $D\colon X\to Z$ be a classifier and $\rho = (x_1,z_1)\ldots (x_n,z_n) \in (X,Z)^n$ be the decision history of $D$. 
%     If $D$ is globally robust, then $\rho$ is runtime robust. If $\rho$ is runtime robust, then there exists a globally robust classifier $D'\colon X\to Z$ indistinguishable to $D$ on $\rho$.
% \end{theorem}
% \KK{May not be true.}

% \cite{biswas2023fairify}

\subsection{Monitoring Runtime \IOR}
We consider the problem of \emph{online} monitoring of runtime \ior of classifiers.
The goal is to design a function---the \emph{monitor}---that observes one long decision sequence of a black-box classifier, and after observing each new decision $(x,z)$,
outputs the set of every past decision $(x',z')$ such that $x$ and $x'$ are close but $z$ and $z'$ are not.
If the monitor always outputs the empty set while observing a given long decision sequence, then the sequence is runtime \ior

% whether the current decision sequence violates runtime input-output robustness.
% When a violation is first detected, the monitor must output the \emph{witness}, i.e., the set of every past decision $(x',z')$ such that $x$ and $x'$ are close but $z\neq z'$.
% If the monitor's output is the empty set, it will indicate no past violations of input-output robustness with respect to the current decision.

\begin{problem}[Monitoring runtime \ior]\label{prob:monitoring i/o robustness}
    Let $D\colon X\to Z$ be an arbitrary (black-box) classifier and let $\epsilon_X, \delta_Z>0$ be constants.
    Compute the function $M\colon (X\times Z)^+\times (X\times Z) \to 2^{(X\times Z)}$ such that for every finite sequence of past decisions $\rho = (x_1,z_1)\ldots (x_n,z_n)$ of $D$, and for every new decision $(x_{n+1},z_{n+1})$,
    \begin{multline*}
        M(\rho,(x_{n+1},z_{n+1})) = \big\lbrace (x_i,z_i), i\in [1;n] \mid \\
        d_X(x_{n+1},x_i)\leq \epsilon_X\land d_Z(z_{n+1}, z_i) > \delta_Z \big\rbrace.
    \end{multline*}
    The function $M$ will be called the \emph{\ior monitor}.
\end{problem}
% \KM{In the submitted version, the last ``$> \delta_Z$'' was ``$\geq \delta_Z$'' which is not the correct definition of a violation if Eq.~\eqref{eq:runtime robustness} is the correct definition of robustness satisfaction. The other possibility is to use ``$\geq$'' here but use ``$<$'' in \eqref{eq:runtime robustness}. Check which version was used in the implementation.}
Monitoring runtime \ior offers an added level of trustworthiness in AI decision making, especially when the underlying decision maker is not known to be globally \ior
One possibility is that the outputs of the monitor can be sent for scrutiny by human experts, so that necessary steps can be taken.
Without monitoring, robustness violations would go undetected, and could manifest in greater risks and loss of trustworthiness.

\subsection{Reduction to Fixed-Radius Nearest Neighbor}
Problem~\ref{prob:monitoring i/o robustness} reduces to the online \emph{fixed-radius nearest neighbor} (FRNN) problem stated below:
 
\begin{problem}[FRNN monitoring]
\label{prob:fixed-radius nearest neighbor}
    Let $Q$ be a set equipped with the distance metric $d_Q$ and $\epsilon_Q>0$ be a given constant.
    Compute the function $M\colon Q^+\times Q\to 2^Q$ such that for every sequence of past points $\rho = q_1\ldots q_n\in Q^+$, and for every new point $q_{n+1}\in Q$,
    \begin{align*}
        M(\rho,q) = \{q_i, i\in [1;n]\mid d_Q(q_{n+1},q_i)\leq \epsilon_Q \}.
    \end{align*}
    The function $M$ will be called the \emph{FRNN monitor}.
\end{problem}
Problem~\ref{prob:monitoring i/o robustness} reduces to Problem~\ref{prob:fixed-radius nearest neighbor} by using $Q = X\times Z$, $\epsilon_Q=\epsilon_X$, and by defining the metric $d_Q$ as follows:
For every $(x,z),(x',z')\in Q$,
\begin{align*}
    d_Q((x,z),(x',z'))\coloneqq 
        \begin{cases}
            d_X(x,x')   &   \text{if }  d_Z(z_{n+1}, z_i) \geq \delta_Z\\
            \infty      &   \text{otherwise.}
        \end{cases}
\end{align*}
The advantage of stating the monitoring problem using Problem~\ref{prob:fixed-radius nearest neighbor} instead of using Problem~\ref{prob:monitoring i/o robustness} is simplicity, and from now on, the ``monitoring problem'' will refer to Problem~\ref{prob:fixed-radius nearest neighbor} unless stated otherwise.

\section{Preliminaries of FRNN Algorithms}
\label{sec:survey}
We review the existing FRNN algorithms with a focus on the ones used by our monitors.
We use the notation from Problem~\ref{prob:fixed-radius nearest neighbor}, where $Q$ is a set with the distance metric $d_Q$, and $\epsilon_Q>0$ is given.

\subsection{Brute-Force (BF) FRNN}
The most straightforward solution of Problem~\ref{prob:fixed-radius nearest neighbor} is the brute-force algorithm, which simply stores the set of seen points in memory, and after seeing a new point from $Q$, performs a brute-force search to collect all $\epsilon_Q$-close points. 
If $Q$ is a $d$-dimensional real-coordinate space, then clearly the time complexity at the $n$-th step is $\mathcal{O}(d\cdot n)$, since the new point must be compared with $n$ other $d$-dimensional points. 
In Section~\ref{sec:static FRNN algorithms}, we will review some FRNN algorithms with asymptotically better complexities but significantly higher overhead costs.
Because of this, for small $n$, an efficient implementation of the brute-force approach is capable of outperforming other alternatives. 
This will be visible in our experiments as well where we use the highly optimized brute-force similarity search algorithm implemented in Meta's Faiss library~\cite{douze2024faiss}.

% which is why our tool implements the highly optimized brute force algorithm by FAISS \cite{?}. 

\subsection{\emph{Static} FRNN with Indexing}
\label{sec:static FRNN algorithms}
Although FRNN is a well-studied problem, most existing \emph{non}-brute-force approaches consider the static, one-shot version of Problem~\ref{prob:fixed-radius nearest neighbor}, namely the setting where the nearest neighbors will be searched once, and the set of seen points is not accumulating.
In Section~\ref{sec:periodic indexing}, we will present FRNN monitors that use static FRNN algorithm as their back-ends, and, in principle, \emph{any} off-the-shelf static FRNN algorithm can be used.
For concrete experimentation and as a proof-of-concept, we chose two representative static algorithms, namely the classic $k$-dimensional tree or \emph{$k$-d tree algorithm} and the \emph{sorting-based nearest neighbor} algorithm.
These algorithms improve over the brute-force alternative by storing the given points in efficient data structures, aka \emph{indexes}, such that searching for nearest neighbors becomes efficient.
We will assume that in Problem~\ref{prob:fixed-radius nearest neighbor}, $Q=\RN^d$  for some dimension $d\in \NN$, and $d_Q$ is either the $L_{2}$-norm or the $L_{\infty}$-norm.
Although there are FRNN algorithms for general metric spaces~\cite{clarkson1997nearest,yianilos1993data}, they will be redundant for most use cases of \ior. 
Let $\{q_1,\ldots,q_n\}\eqqcolon\mathcal{D}\subseteq \mathbb{R}^d$ be the given set of past points, and $q_{n+1}$ be the new point as described in Problem~\ref{prob:fixed-radius nearest neighbor}.

\smallskip
\noindent\textbf{$k$-d trees.}
$k$-d trees are binary trees for storing the given set of points $\mathcal{D}$ in a $k$-dimensional space (for us $k=d$ and the space is $Q$).
% They are binary trees partitioning a $d$-dimensional metric space, w.r.t. the $L_2$- and $L_{\inf}$-norm,\KM{are the norms relevant?} into sub-regions based in a given data set $\aD\subseteq \RN^d$.
Each leaf node of a $k$-d tree contains a set of points in $\mathcal{D}$.
Each internal node divides the space $Q$ into two halves using a hyperplane, and points in $\mathcal{D}$ that are on the left side of the hyperplane are stored in the left sub-tree, whereas the points that are on the right side are stored in the right sub-tree.
% The partition is defined by the internal nodes of the tree, i.e., each non-leaf in the tree is contains by a point $x\in \aD$ and a splitting dimension $k\in [d]$. 
% The left sub-tree contains all points in $y\in \aD$ where $y_k<x_k$ and the right sub-tree containing all points $z\in \aD$ where $z_k\geq x_k$. 
% At each point during the trees construction a splitting dimension $k$ is selected and the median point in $\aD$ along dimension $k$ is chosen as the node splitting the state space as described above. 
The construction of a $k$-d tree from $\mathcal{D}$ takes $\bigO(dn\log n)$ time. 
% The tree can be constructed in $\bigO(dn\log n)$ time. 
$k$-d trees are not optimized for finding all neighbors within a given radius, but rather for identifying a fixed number nearest neighbors, which takes $\bigO(\log n)$ time on an average and still $\bigO(n)$ time in the worst case.
% The single nearest neighbor query requires on average $\bigO(\log n)$ the worst case performance remains $\bigO(n)$.
If we modify $k$-d trees for the purpose of FRNN queries, then each query would take $\bigO(d \cdot n^{1-1/d} + d\cdot m)$ time, where $m$ is the number of neighbors which are $\epsilon_Q$-close to the given input point.
If $m$ is large and approaches $n$, then the time complexity approaches $\bigO(n)$---the same as the brute-force approach.
In practice, most data sets are sparse and $m$ is usually small.
Furthermore, $k$-d tree-based FRNN is superior to brute-force when $d$ is small but $n$ is large, whereas both become equivalent (modulo the additional indexing overhead of $k$-d trees) when $d$ is large and $n$ is small.
% By contrast, an FRNN query requires $\bigO(d \cdot n^{1-1/d})$ time, if the number of returned samples is ignored.
% Therefore, like other tree based methods~\cite{?}, $k$-d tree-based FRNN does not scale well with increasing $d$, providing little benefit over the brute-force search for reasonable sized datasets \cite{ram2019revisiting,preparata2012computational}. 

% \KM{what does ``..if the number..'' mean?} \KK{If the number of samples in the radius is $n$ then the complexity is simply $n$ because you need to generate the output}
% \KM{This contradicts with the statement ``...FRNN algorithms with asymptotically better complexities (than brute-force)...'' in the brute-force part, which I took from the previous version.}
% \KK{No. It does not. If we fix a dimension $d$ and let the sample size grow $n\to \infty$ then $k$-d trees always outperform brute force. 
% If we fix $n$ and let the dimension grow $d\to \infty$ then brute force will be better or equal to that of $k$-d trees.
% In general, if we fix a ratio $d/n$ and let both the dimension  $d\to \infty$ and the sample size grow $n\to \infty$ then the difference in performance of $k$-d trees and brute force depends on the ratio. }

\smallskip
\noindent\textbf{Sorting-based nearest neighbor (SNN).}
The recently developed SNN algorithm uses a sorting-based indexing scheme that is faster to build than $k$-d trees.
% , is an exact FRNN algorithm promising better performance on high-dimensional data with less indexing time than tree-based methods. Although incremental indexing is not supported, the faster indexing time should make this static algorithm better suited for monitoring than KD-trees. 
In particular, the indexing step requires $\bigO(n\log n + nd^2)$-time. The algorithm computes an ascending sequence of key values, each value corresponding to a point in the dataset $\mathcal{D}$.
The property satisfied by the key values is, that if two points have key values $\epsilon$-far apart, then their $L_2$ distance must also be greater than $\epsilon$.
The FRNN queries exploit this relationship. First, the key value of the input point is computed, requiring $\mathcal{O}(d^2)$-time. 
Second the algorithm can safely discards all points with key values $\epsilon$-far from the key value of the input point. This can be done efficiently using binary search in $\bigO(\log n)$-time. Then a brute force search is performed on the remaining points. 
The soundness of SNN relies on the chosen norm, although they are not restricted to the $L_2$ norm, the $L_{\infty}$ norm is not supported \cite{chen2024fast}.

\subsection{Survey of Other FRNN Algorithms}

We chose $k$-d tree~\cite{bentley1975multidimensional} and SNN~\cite{chen2024fast} as representative indexing algorithms, where the indexing of $k$-d trees uses an implicit partitioning over the input space, while the indexing of SNN uses a partition-free approach.
Other algorithms using partitioning-based indexing include R trees \cite{dasgupta2013randomized,guttman1984r}, ball trees \cite{omohundro1989five}, cover trees \cite{beygelzimer2006cover}, general metric trees \cite{ciaccia1997m}, and GriSpy~\cite{chalela2021grispy}, and other algorithms using partition-free indexing include the work by Connor et.al.~\cite{connor2010fast}.
Any of these algorithms could be used in our monitor, and as a general rule of thumb, the partition-based approaches will face higher computational blow-up than the partition-free methods for monitoring systems with high-dimensional input spaces~\cite{chen2024fast}.

The key aspect in monitoring runtime \ior is scalability, which will require us to use optimized FRNN algorithms that are fast even for high-dimensional data, such that the nearest neighbor search consistently ends \emph{before} the arrival of the next input.
% Such high performance requirement is expected to be present in applications like monitoring perception models in autonomous driving or monitoring fraud-detection models in banking.
% \KM{Any known reference for these?}
Multiple strategies could be used for improving scalability.

A first alternative would be to use off-the-shelf \emph{parallelized FRNN algorithms}, which include works on both CPU-based~\cite{kamel1992parallel,cao2020improved,choi2010parallel,men2024pkd,blelloch2022parallel,yesantharao2021parallel} and GPU-based~\cite{prasad2015gpu,you2013parallel,nagarajan2023rt,mandarapu2024arkade} parallelization.
These algorithms have their own strengths and weaknesses, and the choice of the appropriate algorithm will ultimately be driven by the application's requirements.
For instance, algorithms based on recent developments in GPU hardware tend to be limited to 3 dimensions \cite{mandarapu2024arkade}, while CPU-based algorithm are more flexible \cite{blelloch2022parallel}.

In Section~\ref{sec:periodic indexing}, we will build FRNN monitors by periodically recomputing indexes of static FRNN algorithms.
A faster alternative would be to use \emph{dynamic FRNN algorithms} permitting incremental updates to the indexing structure~\cite{fu2000dynamic,eghbali2019online,ciaccia1997,blelloch2022parallel,yesantharao2021parallel}.
% There exist some research about dynamic data structure for the FRNN problem \cite{fu2000dynamic,eghbali2019online,ciaccia1997,blelloch2022parallel,yesantharao2021parallel}. \KM{Missing reference}
Recent works present incremental indexing of Hamming weight trees for FRNN in Hamming spaces~\cite{eghbali2019online} and of $k$-d trees for FRNN in Euclidian metric spaces~\cite{blelloch2022parallel,yesantharao2021parallel}.
Most of these works provide approximate solutions (explained below), whereas we intend to find the exact set of nearest neighbors; the few existing algorithms~\cite{blelloch2022parallel,yesantharao2021parallel} for the exact setting will be incorporated in future editions of our monitors.
% However, there are barely any exact FRNN tools implementing dynamic updates. We only identified tools by Blelloch et.al.~\cite{blelloch2022parallel} and by Yesantharao et.al.~\cite{yesantharao2021parallel}.
% We will investigate their effectiveness in monitoring in future.

% \KM{Can we pick any two and show that they have complementary strength and weakness? From the write-up, it feels one is always better than the other. I wanted to do it for the last two, but don't see from the description why someone should use Mandarapu's algorithm over Nagarajan.}

A third alternative would be to use \emph{approximate FRNN algorithms}, which would trade off monitoring accuracy with performance and may occasionally output false positives (reporting robustness violations even if there is none) or false negatives (not reporting robustness violations even if there are some).
The literature on approximate FRNN is vast, and includes approximation schemes that are either data-dependent~\cite{andoni2014beyond} or data-independent~\cite{andoni2008near}, and use various techniques ranging from input space dimensionality reduction~\cite{andoni2018approximate} to approximate tree-based space partitioning~\cite{lin2006random,beygelzimer2006cover,ram2019revisiting} to the hierarchical navigable small world search algorithm~\cite{malkov2018efficient}.
All these approaches can be integrated within our monitor, and may be useful if occasional false outputs are acceptable.

\section{FRNN Monitoring via Periodic Indexing}
\label{sec:periodic indexing}
Most existing, non-brute-force FRNN algorithms build index structures for storing the set of input points, and these indexes usually do not support incremental updates that would be suitable for monitoring input \emph{sequences}. 
Recomputing the entire index at each step would incur a substantial computational cost and is infeasible.
We resolve this in Algorithm~\ref{alg:periodic indexing} by using a simple practical approach, namely re-indexing the FRNN data structure only periodically, after the interval of a given fixed number of inputs $\tau>0$.
The algorithm stores the past inputs in two separate memories, namely a long-term memory $L$ and a short-term memory $S$.
The long-term memory $L$ is updated periodically and stores past inputs that appeared \emph{before} the last update of $L$, while the short-term memory $S$ is the ``buffer'' that stores inputs that appeared \emph{after} the last update of $L$.  
Every time a new input $q$ appears, we need to search for its neighbors in the set $L\cup S$.
We delegate the search over $S$ to the brute-force approach and the search over $L$ separately to a static FRNN approach.
Every time the size of $S$ reaches $\tau$, we transfer the points in $S$ to $L$, reset $S$ to the empty set, rebuild the index of the static FRNN algorithm using the updated $L$, and continue with the next input.

In our experiments, for the static part, we compared $k$-d tree-based FRNN and SNN as proof-of-concept, although \emph{any} FRNN algorithm could be used.
For a fixed dimension of the input space and for a large set of past input points, the performances of $k$-d trees and SNN are significantly better than the brute-force approach, making them suitable for searching over $L$.
% with SNN having a slight edge due to a smaller indexing time.
The hyper-parameter $\tau$ needs to be selected in a way that it creates a balance between the increasing query time of the brute-force algorithm and the cost of re-indexing the static FRNN algorithm. 
%We chose the best $\tau$ using trial-and-error; more principled approaches will be developed in future works.
The following method can be applied.
Let $n$ be a given number of input-output pairs, and suppose $f(n)$, $g(n)$, and $h(n)$ represent, respectively, the worst-case time complexities of computing the FRNN index, the search over the long-term memory, and the search over the short-term memory. Typically, $f(n) > h(n) > g(n)$, and there is the tradeoff between indexing too frequently (low $\tau$), paying the high price of $f(n)$ more often, and indexing too rarely (high $\tau$), paying the price of $h(n)$ more often while the lower amount $g(n)$ could be used. We can express the overall amortized complexity of the monitoring algorithm (per execution step) as $T(\tau) = (f(n+\tau) + \tau g(n) + \sum_{i=1}^\tau h(i))/\tau$. The functions $f$, $g$, and $h$ will depend on the particular algorithms being employed, but usually they are simple enough that we can find the optimal $\tau$ such that $T(\tau)$ is minimized.

% The short-term memory, updated incrementally, contains recent decisions and is queried using a dynamic FRNN algorithm, e.g., brute force. 
% The long-term memory, updated periodically, is implemented using a static FRNN algorithm. Once the capacity of the short-term memory is exhausted, all data is flushed into the long-term memory. 
% This requires the rebuilding of static FRNN algorithm's indexing structure.
% Due to the constant indexing complexity, we chose brute force to search the short-term memory. 
% For a fixed input dimension the performance of $k$-d trees and SNN improves relative to brute force as the sample size increases. Hence, either of them is recommended to search the long-term memory, 
% with SNN having a slight edge due to a smaller indexing time.
% The parameter $\tau$ is a hyper-parameter selected to balance the increasing query time of the dynamic algorithm with the cost of re-indexing the static algorithm. 

\begin{algorithm}[t]
    \caption{FRNN Monitor: Static FRNN with Periodic Indexing}
    \label{alg:periodic indexing}
    \begin{algorithmic}[1]
        \Require Space $Q$, distance metric $d_Q$, constant $\epsilon_Q>0$
        % \Ensure
        % \State $\rho\gets \varepsilon$ \Comment{initialize input sequence}
        \State $L\gets \emptyset$ \Comment{initialize long-term memory}
        \State $S\gets \emptyset$ \Comment{initialize short-term memory}
        \State $F \gets \mathit{StaticFRNN}(Q,d_Q,\epsilon_Q)$ \Comment{initialize static FRNN object}
        \State $F.\mathit{ComputeIndex}(L)$ \Comment{\emph{initial indexing}}
        \While{$\mathit{true}$} \Comment{monitoring begins}
            \State $q\gets \mathit{GetNewInput}()$ \Comment{$q$ is the new input}
            % \State $\rho\gets \rho\cdot q$  \Comment{update input sequence}
            \State $U\gets F.\mathit{FRNN}(q)$ \Comment{static FRNN}
            \State $V\gets \mathit{BruteForceFRNN}(S,q;d_Q,\epsilon_Q)$ \Comment{brute-force FRNN}
            % \State \textbf{output} $M(\rho, q)= U\cup V$ \Comment{monitor's output}
            \State \textbf{output} $U\cup V$ \Comment{monitor's output}
            \State $S\gets S\cup \{q\}$ \Comment{update short-term memory}
            \If{$|S|=\tau$} \Comment{\emph{recompute index} for static FRNN?}
                \State $L\gets L\cup S$ \Comment{update long-term memory}
                \State $S\gets \emptyset$ \Comment{reset short-term memory}
                \State $F.\mathit{ComputeIndex}(L)$ \Comment{recompute index}
            \EndIf
        \EndWhile
    \end{algorithmic}
\end{algorithm}

\section{FRNN Monitoring using\\ Binary Decision Diagrams (BDD)}
\label{sec:bdd}
We now present a new FRNN monitoring algorithm that is suitable in the dynamic setting of monitoring, where the seen inputs are incrementally updated over time.
Our new algorithm gives rise to monitors that in many cases outperform the monitors using off-the-shelf FRNN algorithms with periodic indexing.
The ides is to perform a bi-level search, where the top-level search is \emph{fast but approximate}, and uses an indexing scheme with \emph{binary decision diagrams} (BDD), and the bottom-level search is \emph{slow but exact}, and uses the brute-force algorithm.
While the top-level search would quickly narrow down the search space, it can generate false positives, which would then be eliminated by the bottom-level brute-force search.
This algorithm works only for $L_\infty$ distance; extensions to other metrics is left for future work.

% In the realm 

% A BDD is a special kind of decision tree that represents boolean functions over boolean variables.
% BDDs were introduced in the eighties by Bryant~\cite{?} for efficient hardware verification, but afterwards it got adopted in many successful software verification tools developed by formal methods researchers in computer science.
% An in-depth introduction to BDDs can be found in standard textbooks~\cite{?}.

\smallskip
\noindent\textbf{BDDs for indexing.}
BDDs have been extensively used in hardware and software verification in computer science.
The syntactic description of BDDs is irrelevant and out of scope; see the work of Bryant~\cite{bryant2018binary} for reference.
We will use BDDs essentially as black-boxes to build an efficient data structure for the FRNN search.
To this end, a BDD is essentially a function of the form $f\colon \{0,1\}^n\to \{0,1\}$, where $n>0$ is the number of boolean input variables.
Let $Q$, $d_Q$, and $\epsilon_Q$ be as defined in Problem~\ref{prob:fixed-radius nearest neighbor}.

\begin{wrapfigure}{r}{0.3\linewidth}
    \centering
    \includegraphics[width=1\linewidth]{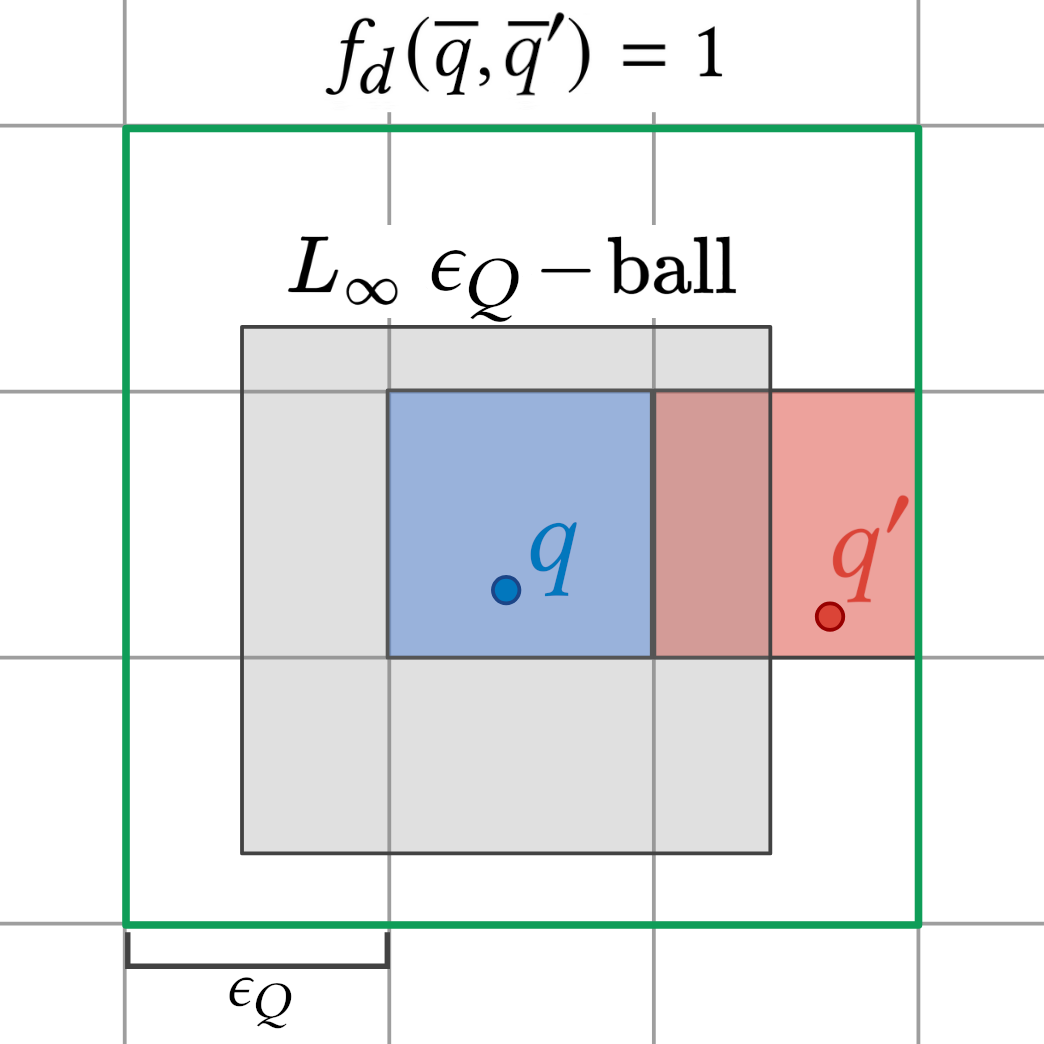}
    \caption{
    Discretization of $Q=\mathbb{R}^2$. The points have neighboring label, but are $\epsilon_Q$ apart.
    % The grid represents the discretization of $Q=\mathbb{R}^2$. The points $q$ and $q'$ have neighboring label vectors, but their distance is above $\epsilon_Q$.
    }
    \label{fig:falsepositives}
\end{wrapfigure}
Suppose $Q$ has some real and some categorical dimensions.
Assuming that the real dimensions have a bounded range, we will discretize the real dimensions into \emph{finitely} many (non-overlapping) intervals of width $\epsilon_Q$.
Each interval has a representative point as its label, and every other points inside this interval is ``approximated'' by the same label.
For example, in a data set on humans, ``age'' can be a real-valued feature, whose realistic range could be $0$--$120$.
If $\epsilon_Q=1$, the discrete intervals become $[0,1),[1,2),\ldots,[119,120)$.
For each interval, we can choose the lower bound as its label, i.e., the label of $[34,35)$ is $34$.
Given a real feature value $x$, we will use $\overline{x}$ to represent its respective label. 
The categorical dimensions (such as ``gender'') are assumed to have only finitely many values, and the ``label'' of each possible value is the value itself.
This way, we discretized $Q$ using a set of finitely many vectors of labels, call it $\overline{Q}$, where the $i$-th element of each vector is the respective label in the $i$-th feature dimension.
For a given input $q\in Q$, let $\overline{q}$ represent its corresponding label vector.

We now introduce the the BDD $f\colon \{0,1\}^n\to \{0,1\}$ which will store the set $W$ of past inputs using their discrete label vectors.
For this, we will require $n = \log(|\overline{Q}|)$ bits to encode the label vectors.
At each point, for a given $q\in Q$, and assuming $b_q$ is the boolean encoding of $\overline{q}$, $f(b)=1$ iff $q$ has been seen in the past.
After seeing every new input, $f$ can be updated using standard BDD operations~\cite{bryant2018binary}.
We also maintain a dictionary $\Delta$, that maps each label vector $v\in \overline{Q}$ to the set of seen points $q\in Q$ with label $v$.
We need another BDD for encoding the distance function $d_Q$, denoted as $f_d\colon \{0,1\}^{2n}\to \{0,1\}$, which takes as inputs two binary encoded label vectors $b,b'$ of two inputs $q,q'$, and outputs $f_d(b,b')=1$ iff each vector entry of $\overline{q}$ and $\overline{q'}$ are either all the same or are adjacent labels to each other; in our example of ``age,'' $34$ and $35$ are adjacent labels.
The BDD $f_d$ is constructed statically in the beginning using standard BDD operations~\cite{bryant2018binary}.

\smallskip
\noindent\textbf{The sketch of the algorithm.}
The FRNN monitor uses a \emph{hierarchical FRNN search}:
The BDDs sit at the top-level, and after each new input $q$ is observed, quickly checks if any past point had the same or adjacent labels.
There are three possible outcomes:
(a)~Neither $\overline{q}$ nor its neighbors appeared before, in which case the monitor outputs $M(\cdot,q)=\emptyset$, a case that we expect to experience most of the time.
(b)~The vector $\overline{q}$ appeared before but none of its neighbors did, in which case the monitor outputs $M(\cdot, q) = \Delta(\overline{q})$, since every two points with the same label vector are at most $\epsilon_Q$ apart.
(c)~Both (a) and (b) are false, i.e., some neighbor $\overline{q'}$ of $\overline{q}$ appeared before, which could mean either a true positive or false positive, since the distance between two points with neighboring labels may or may not be smaller than $\epsilon_Q$; see Figure~\ref{fig:falsepositives} for an illustration.

When Option~(c) is true and the result of the top-level search is inconclusive, the bottom-level brute-force search comes to rescue.
But now, the brute-force algorithm only needs to search within the set of seen inputs that have the neighboring labels of $\overline{q}$, which in most case will be significantly faster than the regular brute-force search over the entire set of seen inputs.
The pseudocode of the full algorithm is included in Appendix~\ref{app:FRNN using BDD}, which includes elementary BDD operations like disjunction and membership query.
Even though these operations have exponential complexity in the number of BDD variables $n$~\cite{bryant2018binary}, i.e., linear complexity with respect to $|\overline{Q}|$, still in practice, modern BDD libraries use a number of smart heuristics and have superior scalability.

\section{Performance Optimization: Parallelized FRNN}
Many existing FRNN algorithms, including $k$-d trees and our BDD-based algorithm, use partitioning of the input space, which causes a blow-up in the complexity with growing search space dimension~\cite{andoni2017nearest}. 
We present a parallelization scheme for performing nearest neighbor search over real-coordinate spaces equipped with the $L_\infty$ distance metric.
The idea is that for the $L_\infty$ metric, two points are $\epsilon$-close iff they are $\epsilon$-close in \emph{each} dimension.
This inspired us to decompose the given FRNN problem instance into multiple sub-instances of FRNN with fewer dimensions than the original problem.
Each sub-problem can be solved independently, and therefore in parallel, and it is made sure that when the solutions of the sub-problems are composed in a certain way, we obtain a solution for the original FRNN problem.
Our parallelization scheme works as a wrapper on any FRNN monitoring algorithm.
In our experiments, we demonstrate the efficacy of the parallelized version of both $k$-d tree-based FRNN monitors and BDD-based FRNN monitors.

\algblockdefx{ParStart}{ParEnd}{\textbf{Do in parallel}}{\textbf{End parallel}}
\begin{algorithm}
    \caption{Parallelized FRNN Monitoring}
    \label{alg:parallel}
    \begin{algorithmic}[1]
        \Require Space $Q$, distance metric $d_Q$, constant $\epsilon_Q>0$
        % \Require Finite set $S\subset \mathbb{R}^{2n}$, $p\in \mathbb{R}^{2n}$, $\epsilon>0$
        % \Ensure The set $T\subseteq S$ of $\epsilon$-neighbors of $p$
        \State $S\gets \emptyset$ \Comment{initialize memory}
        \While{$\mathit{true}$} \Comment{monitoring begins}
            \State $q\gets \mathit{GetNewInput}()$ \Comment{$q$ is the new input}
            \State $\overline{S}\gets \mathit{AssignUniqueLabels}(S)$ \Comment{$\overline{S}\subset \mathbb{R}^{2n}\times\mathbb{N}$}\label{alg:step:label}
            \State $S_A \gets \{s\in \mathbb{R}^n\times \mathbb{N} \mid \exists q\in \overline{S}\;.\; (q.\mathrm{A},q.\mathrm{id}) = s\}$ \label{alg:step:project S to the first component} \Comment{projection on $A$}
            \State $S_B \gets \{s\in \mathbb{R}^n\times \mathbb{N} \mid \exists q\in \overline{S}\;.\; (q.\mathrm{B},q.\mathrm{id}) = s\}$ \label{alg:step:project S to the last component} \Comment{projection on $B$}
            \ParStart 
            \State $T_A\gets \text{FRNN}(S_A, (p.\mathrm{A},p.\mathrm{id});d_Q,\epsilon_Q)$ \Comment{local FRNN in $A$} \label{alg:step:solve FRNN on A}
            \State $T_B\gets \text{FRNN}(S_B, (p.\mathrm{B},p.\mathrm{id});d_Q,\epsilon_Q)$ \Comment{local FRNN in $B$} \label{alg:step:solve FRNN on B}
            \ParEnd
            \State $T \gets \{ t\in \mathbb{R}^{2n}\mid \exists i\in \mathbb{N}\;.\;\exists a\in T_A\;.\;\exists b\in T_B\;.\; (t.\mathrm{A},i) = a, (t.\mathrm{B},i)=b \}$ \Comment{composition of outputs of local FRNN monitors}\label{alg:step:compose}
            \State \textbf{output} $T$
            \State $S\gets S\cup \{q\}$ \Comment{update memory}
        \EndWhile
    \end{algorithmic}
\end{algorithm}

For simplicity, we explain the parallelized algorithm using two parallel decompositions of the given problem; the extension to arbitrarily many decompositions is straightforward.
Before describing the algorithm, we introduce some notation. 
We consider FRNN problems on the metric space $(\mathbb{R}^{2n},d_\infty)$ where $d_\infty$ is the $L_\infty$ norm.
For a given point $q = (r_1,\ldots,r_n,r_{n+1},\ldots,r_{2n})\in \mathbb{R}^{2n}$, we will write $q.\mathrm{A}$ and $q.\mathrm{B}$ to respectively denote the projections $(r_1,\ldots,r_n)$ and $(r_{n+1},\ldots,r_{2n})$.
We will use the augmented space $(\mathbb{R}^{2n}\times\mathbb{N},d_\infty')$, where the extra dimension $\mathbb{N}$ will be used to add unique labels to the points in $\mathbb{R}^{2n}$, and $d_\infty'$ equals to $d_\infty$ with the labels of the points ignored.
For a given point $q = (r_1,\ldots,r_n,r_{n+1},\ldots,r_{2n},m)$ in the augmented set, we will write $q.\mathrm{id}$ to denote the label $m$ of $q$, while $q.A$ and $q.B$ will as usual represent $(r_1,\ldots,r_n)$ and $(r_{n+1},\ldots,r_{2n})$, respectively.
We define the function $\mathit{AssignUniqueLabels}$ which takes as input a given \emph{finite} set $S\subset \mathbb{R}^{2n}$, and outputs a set $\overline{S}$ in the augmented space $\mathbb{R}^{2n}\times \mathbb{N}$ such that $\overline{S}$ exactly contains the elements of $S$ which are now assigned unique labels.
% A point $s\in \mathbb{R}^{2n}$ can get multiple

% Given an arbitrary set of points $S$, we assume that every point $s\in S$ has a unique identification number, which is a natural number whose value is $s.\mathrm{id}$.
% Effectively, the points in $S$ belong to the set $\mathbb{R}^{2n}\times \mathbb{N}$, where the set $\mathbb{N}$ represents the set of identification numbers.
% We omit $\mathbf{N}$ from the notation for simplicity.
% % For a given sequence of points $w_1w_2\ldots$, and given an arbitrary point $p$ from the sequence, we will write $p.\mathrm{id}$ to denote the largest $i$ for which $w_i = p$; this is a unique identification number that every point in the sequence receives.
% We will also write $p.\mathrm{A}$ and $p.\mathrm{B}$ to denote the values of, respectively, the first $n$ and the last $n$ coordinates of $p$.
% Clearly, $p.\mathrm{A},p.\mathrm{B}\in\mathbb{R}^n$.

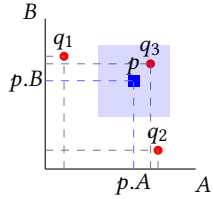
\begin{wrapfigure}{r}{0.3\linewidth}
    \centering
    \begin{tikzpicture}
        \draw[-]    (0,0)   --  (2,0)   node    at  (2.1,-0.2)  {$A$};
        \draw[-]    (0,0)   --  (0,2)   node    at  (-0.2,2.1)  {$B$};
        \fill[red] (0.25,1.5) circle (1.7pt);
        \node   at  (0.25,1.7)  {$q_1$};
        \draw[dashed,black!50!white]   (0,1.5) --  (0.25,1.5);
        \draw[dashed,black!50!white]   (0.25,0) --  (0.25,1.5);
        % \node   at  (1.1,1.75)  {$a=(0.25,1.5,1)$};
        \fill[red] (1.5,0.25) circle (1.7pt);
        \node   at  (1.5,0.45)  {$q_2$};
        \draw[dashed,black!50!white]   (1.5,0) --  (1.5,0.25);
        \draw[dashed,black!50!white]   (0,0.25) --  (1.5,0.25);
        % \node   at  (1.5,0.15)  {$b=(0.25,1.5,1)$};
        \fill[red] (1.4,1.4) circle (1.7pt);
        \node   at  (1.4,1.6)  {$q_3$};
        \draw[dashed,black!50!white]   (1.4,0) --  (1.4,1.4);
        \draw[dashed,black!50!white]   (0,1.4) --  (1.4,1.4);
        % \node   at  (0.25,1.5)  {$a=(0.25,1.5,1)$};

        \fill[blue,opacity=0.15] (0.7,0.7)   rectangle   (1.65,1.65);
        \fill[blue] (1.1,1.1) rectangle   (1.25,1.25);
        \node   at  (1.175,1.375)  {$p$};
        \node   at  (1.175,-0.2)    {$p.A$};
        \node   at  (-0.25,1.175)    {$p.B$};
        \draw[dashed,blue!60!white] (1.175,0)   --  (1.175,1.175);
        \draw[dashed,blue!60!white] (0,1.175)   --  (1.175,1.175);
    \end{tikzpicture}
    \caption{Parallelized FRNN monitoring.}
    \label{fig:parallel}
\end{wrapfigure}
The parallelized algorithm is presented in Algorithm~\ref{alg:parallel}, and we explain it using a simple example in Figure~\ref{fig:parallel}.
Suppose $n=1$, i.e., the original FRNN problem is in the metric space $(\mathbb{R}^2,d_\infty)$.
Let the inputs be the set $S = \{q_1,q_2,q_3\}$ and the new point $p$, as shown in Figure~\ref{fig:parallel}, where the shaded region around $p$ represents its $\epsilon_Q$-neighborhood for a given $\epsilon_Q>0$ (as in Problem~\ref{prob:fixed-radius nearest neighbor}).
Clearly, the FRNN algorithm should output $T=\{q_3\}$ as the set of $\epsilon$-neighbors of $p$.

To solve this problem in parallel, we first assign unique labels (Line~\ref{alg:step:label} in Algorithm~\ref{alg:parallel}) to the points $q_1,q_2,q_3$; let $q_i.\mathrm{id}=i$ for every $i\in\{1,2,3\}$.
Then we project $S$ to the two individual dimensions $A$ and $B$, giving us the lower dimensional sets $S_A$ and $S_B$ (Lines~\ref{alg:step:project S to the first component} and \ref{alg:step:project S to the last component}), where we ensure that the labels of the points are preserved in the projections.
Now we solve---in parallel---the two single-dimensional FRNN instances $(S_A, (p.\mathrm{A},p.\mathrm{id}),\epsilon_Q)$ and $(S_B, (p.\mathrm{B},p.\mathrm{id}),\epsilon_Q)$, giving us the lower-dimensional sets of $\epsilon_Q$-close points $T_A$ and $T_B$, respectively (Lines~\ref{alg:step:solve FRNN on A} and \ref{alg:step:solve FRNN on B}).
Finally we compose $T_A$ and $T_B$ to obtain the final answer $T$ (Line~\ref{alg:step:compose}).
The key insight is that the $d_\infty$-norm suggests that two points $p$ and $q$ are $\epsilon_Q$-close iff they are $\epsilon_Q$-close in \emph{all} dimensions.
Therefore, if there is a point $q$ that is $\epsilon_Q$-close to $p $ in $A$ but not in $B$ or vice versa, then $q$ is not $\epsilon$-close to $p$ and hence is not included in $T$; this case applies to both points $q_1$ and $q_2$ in Figure~\ref{fig:parallel}.
As the point $q_3$ is $\epsilon$-close to $p$ in both dimensions, it is added to $T$.
Note that the additional identification labels of the points help us to perform this synchronized check across both $A$ and $B$ dimensions.

% In step~\ref{alg:step:label}, we assign unique labels to the elements of $S$ to obtain $\overline{S}$ in the augmented set.
% In steps~\ref{alg:step:project S to the first component} and \ref{alg:step:project S to the last component}, we project each element of $\overline{S}$ to the first and last $n$ dimensions, which respectively give us the sets $S_A$ and $S_B$ in the low-dimensional space of dimension $n$.
% While doing the projection, we make sure that for each point $s\in S$, the projection $s.A$ in $S_A$ is assigned the same id as $s.\mathrm{id}$
% In steps~\ref{alg:step:solve FRNN on A} and \ref{alg:step:solve FRNN on B}, we solve the lower-dimensional nearest neighbor 

% \input{survey}
\section{Experimental Evaluation}

We implemented our algorithms in the tool \tool, and use it to monitor well-known benchmark models from the literature. 
On one hand, we demonstrate the monitors' effectiveness on real-world benchmarks (Section~\ref{subsec:effectiveness}), and on the other hand we demonstrate their feasibility in terms of computational resources (Section~\ref{subsec:performance}).

Different parts of the experiments were run on different machines. 
Our monitors for all our examples run on CPUs, and GPU-based implementations (especially the parallellized FRNN algorithm) are left for future works.
The only place GPUs were used are for training the models used in our experiments.
Our monitors were evaluated on personal laptops with 8GB memory for all examples other than the adversarial robustness example with ImageNet model, whose feature space is too large (150,000 features) for personal laptops, and we used machines with 256GB memory for this one experiment.

% We present two categories of results.
% First, we demonstrate the positive impact of monitoring in adversarial robustness, semantic robustness, and individual fairness benchmarks.
% Second, we conduct ablation tests to show how the computational performances of different monitoring algorithms are influenced by the lengths of decision sequences, number of features, and the number of parallel workers.
% Since monitors treat the AI models as black-boxes, their performances are unaffected by model complexities (number of neurons, etc.).

% First, we report the findings from experiments on adversarial robustness, semantic robustness, and individual fairness benchmarks.
% In each case, we used real-world data sets and established baseline and robust models from the literature. 
% Then we executed these models on streams of test data that were systematically corrupted with noise so that the models would violate input-output robustness.
% We collected input-output traces from the models, and then ran our monitors on them.
% Since our monitors are correct by design, they were able to correctly flag each of robustness violations: without monitoring, these violations would go undetected.
% As the outcome, we confirm that the monitor found fewer violations for models that were designed with more robust training algorithms, as expected.

\subsection{Practical Applications of \IOR Monitoring}\label{subsec:effectiveness}
The experimental setup and the corresponding results are provided in Table~\ref{tab:app_tpr}.
For adversarial and semantic robustness, we picked a number of image data sets provided by RobustBench \cite{croce2020robustbench}, which is a standardized benchmark suite for comparing robustness of AI models.
For each of the data sets, we picked the best and the worst performing models, and provided them with a sequence of input images, some of which were deliberately modified with adversarial or semantic corruptions.
For individual fairness, we picked the standard fairness data sets, and used baseline and fair models from various existing works from the literature.
These models were then given a sequence of input features of individuals.
In all these experiments, we deployed our monitors to track the violation of the respective robustness or fairness conditions by the AI models.

\begin{figure}[H]
    \centering
    \begin{tabular}{cc}
        \includegraphics[width=0.575\linewidth]{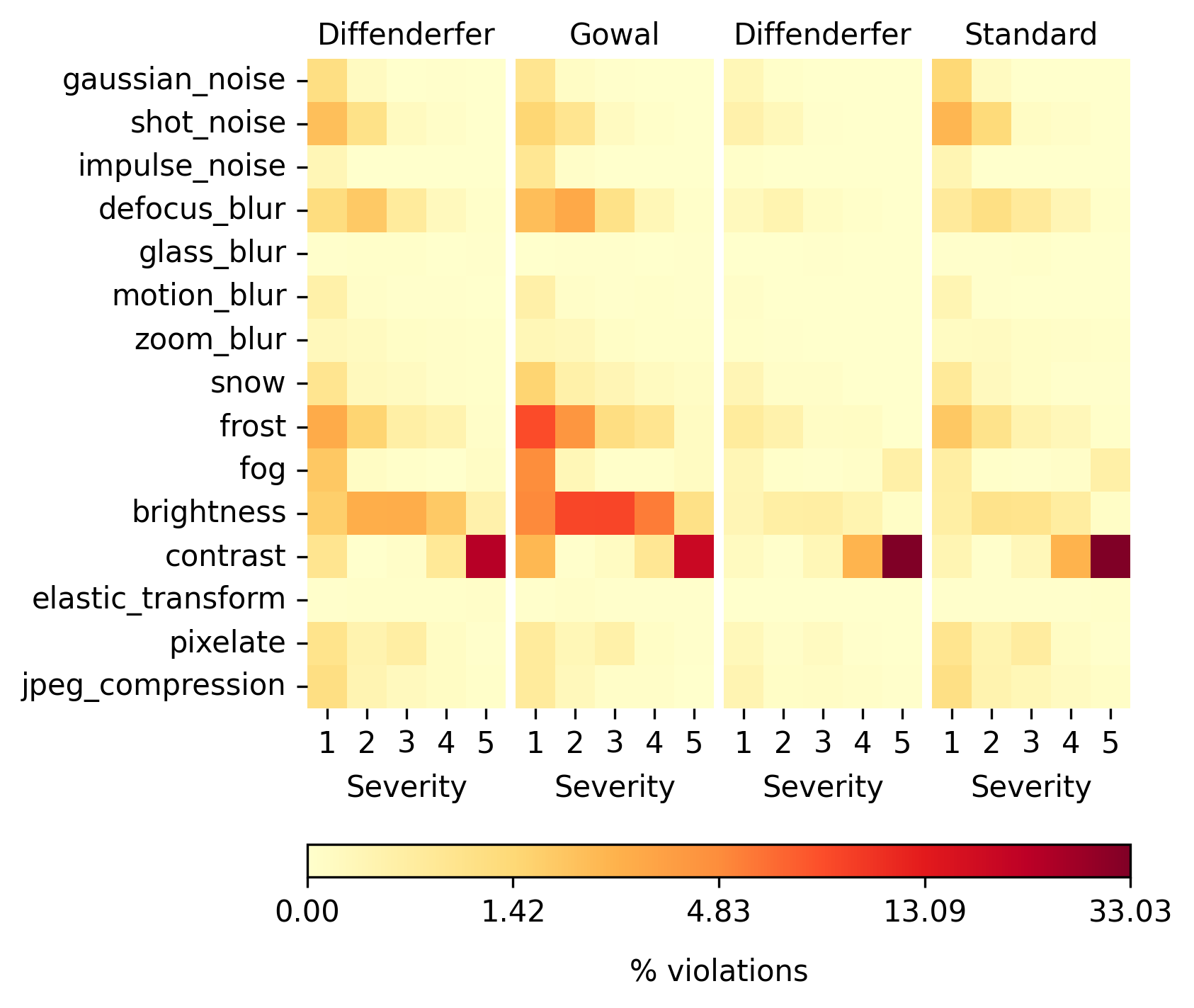} & 
        \includegraphics[width=0.33\linewidth]{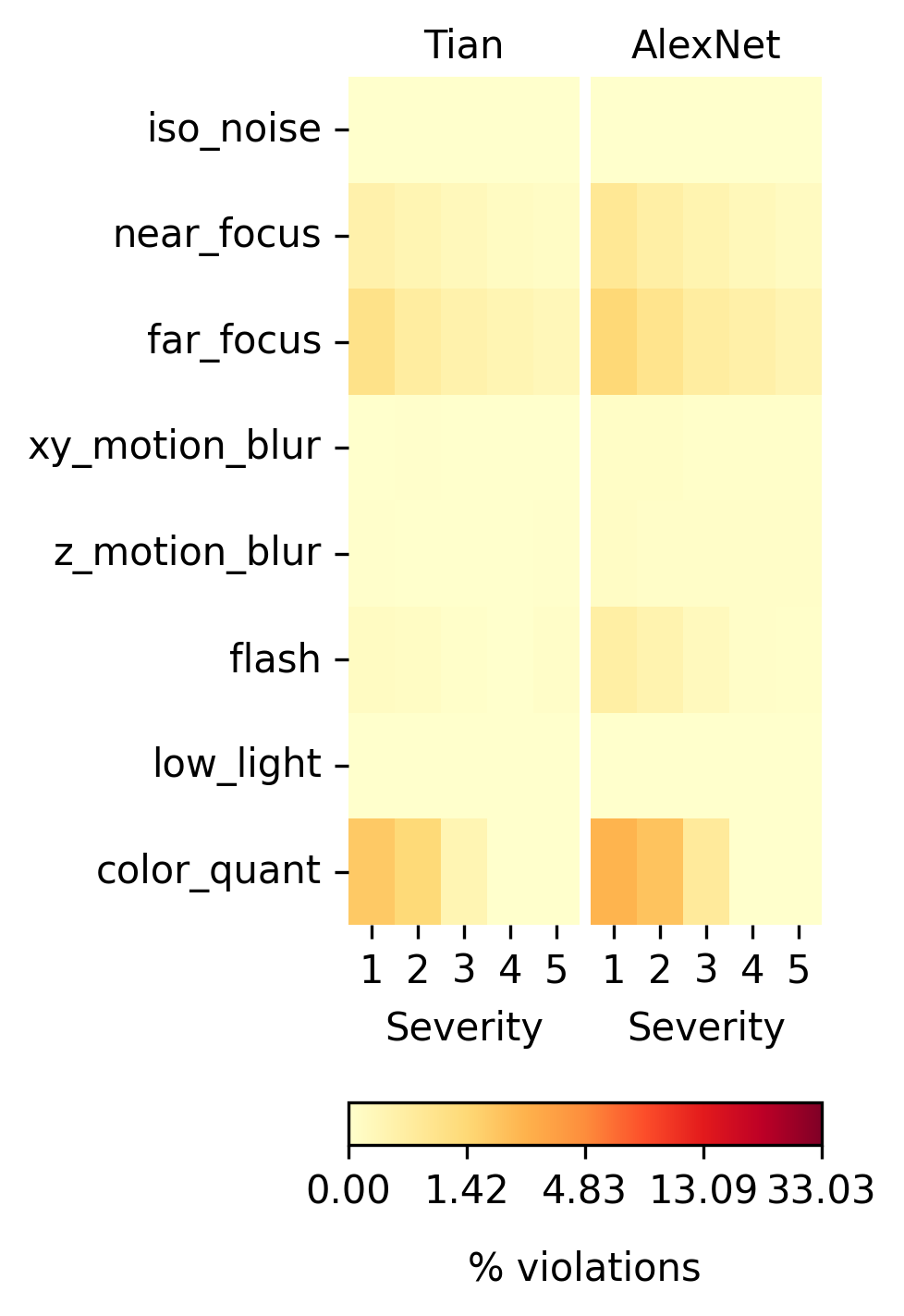} 
    \end{tabular}
    \caption{Semantic robustness violations identified during monitoring. Left to right: Robust \cite{diffenderfer2021winning} and base \cite{gowal2020uncovering} CIFAR100C model, robust \cite{diffenderfer2021winning} and base \cite{croce2020robustbench} CIFAR10C model, robust \cite{tian2022deeper} and base \cite{croce2020robustbench} ImageNet model.}
    \label{fig:heat}
\end{figure}

\smallskip
\noindent\textbf{Key Takeaways.}
The results are summarized in Table~\ref{tab:app_tpr}, and some concrete instances of detected semantic robustness violations are displayed in Appendix~\ref{sec:appendix:counter-examples}.
Our monitors always output correct answers by design, and therefore the only interesting quality metric is the violations rate, representing the average number of inputs for which the monitor detected \ior violations.
% The bottom line is that, e
Except for a few cases of individual fairness, the violation rate is always positive, highlighting the need for monitoring as an additional safeguard.

The violation rate also confirms that robust or fair training algorithms do improve the runtime \ior in practice, which is expected.

For semantic robustness, we observe some interesting trends.
As the severity of the corruption increases, in most cases the violation rate goes down.
We suspect that this is because higher corruption increases the distance between the corrupted and the original image in the embedding space, and differences in output labels are not considered as robustness violation anymore.
However, for corruption of contrast, this trend does not hold.

% In all cases, we observe that our monitor identifies less \ior violations for the robust or fair models, if compared to the respective baseline models. 
% This is to be expected and shows the benefit of training robust models. However, our results show that training alone is insufficient, as the monitor is still able to detect \ior violation.
% Highlighting the need for monitoring as an additional safeguard. Moreover, for semantic robustness we can observe in Fig.~\ref{fig:heat} that as the corruption severity increases, the number of detected violations decreases, with the contrast corruption being an exception. We suspect that in most cases an increase in severity results in a large difference in the embedding space, thus leading to a small number of violations. The opposite dynamic can be observed for contrast corruption.
We report the time and the memory requirements for monitoring the sequence of all the inputs from the entire benchmark data sets. We observe that the brute-force algorithm outperforms all other approaches in every category, which can be explained by the small number of data points for which brute-force excels (see Section~\ref{subsec:performance}).  
% The reason is that the number of samples in the data sets are small (see Section~\ref{subsec:performance}). 
% \KK{add comparison to local and global robustness verification techniques?}

\begin{table*}
    \small
    \centering
    \setlength{\tabcolsep}{3pt}
    \begin{tabular}{c| c@{\hskip 1pt} c @{\hskip 1pt} c | c@{\hskip 1pt} c | c c c c c | c c c c c |  c@{\hskip 1pt} c@{\hskip 1pt} c@{\hskip 1pt} | c@{\hskip 1pt} c@{\hskip 1pt} c@{\hskip 1pt}}
        \toprule
         & Data Set  & $n$ & $d$  &  Norm  & $\epsilon$  & Base & \multicolumn{2}{c}{\# Param.} & \multicolumn{2}{c|}{Violations~(\%) } & Robust &  \multicolumn{2}{c}{\#Param.}  & \multicolumn{2}{c|}{Violations~(\%)}  &  \multicolumn{3}{c|}{Time per sample (ms)} & \multicolumn{3}{c}{Memory (MB)}\\
        \midrule
               &  &  &  & &  & &    & &  &  &   & && && BF  & $k$-d & $k$-d (16t) & BF  & $k$-d & $k$-d (16t) \\
        \midrule
        \multirow{3}{*}{\shortstack[1]{Adv. \\ Robust.}} &  CIFAR-10 &  \multirow{2}{*}{$20k$}  & \multirow{2}{*}{$3.1k$} &  \multirow{3}{*}{$L_{\infty}$} & \multirow{2}{*}{ $\frac{8}{255}$} 
        & \cite{croce2020robustbench} & \multicolumn{2}{c}{36M} & \multicolumn{2}{c|}{0.948} &\cite{bartoldson2024adversarial} & \multicolumn{2}{c}{366M} & \multicolumn{2}{c|}{0.196} & 3.71 & 6.09 &  1.87 &  2191 & 2852 &1946\\

        &  CIFAR-100 &   &  &   &  & \cite{rice2020overfitting} & \multicolumn{2}{c}{11M} & \multicolumn{2}{c|}{0.344} & 
        \cite{wang2023better} & \multicolumn{2}{c}{267M} & \multicolumn{2}{c|}{0.316} & 5.47  & 32.04 & 9.73 & 2057  & 2878 & 1945 \\

        & ImageNet &  $10k$  & $150.5k$ &   &  $\frac{4}{255}$ & \cite{croce2020robustbench} & \multicolumn{2}{c}{26M} & \multicolumn{2}{c|}{0.767}
        & \cite{amini2024meansparse} & \multicolumn{2}{c}{198M} & \multicolumn{2}{c|}{0.186} & 0.26s  & 7.73s  & 0.62s & 64GB  & 75GB & 65GB \\
        
        \midrule
         &  &  &  & &  & &    & &  &  &  && &&&   BF  & $k$-d & SNN & BF  & $k$-d & SNN \\
        \midrule
        \multirow{3}{*}{\shortstack[1]{Sem. \\ Robust.}} &  CIFAR-10-C  &  \multirow{2}{*}{$20k$}  & \multirow{3}{*}{$384^{*}$} &  \multirow{3}{*}{$L_{2}$} & \multirow{2}{*}{ $7.5$} & 
        \cite{croce2020robustbench} & \multicolumn{2}{c}{36M} &  \multicolumn{2}{c|}{\multirow{3}{*}{ {Fig~\ref{fig:heat}}} } 
        & \cite{diffenderfer2021winning} & \multicolumn{2}{c}{268M} &  \multicolumn{2}{c|}{\multirow{3}{*}{ {Fig~\ref{fig:heat}}}} & 2.36  & 83.75  & 12.75 & 388 & 519 & 573\\
        
        &  CIFAR-100-C  &   &  &   & & \cite{gowal2020uncovering} & \multicolumn{2}{c}{267M} & \multicolumn{2}{c|}{}
        & \cite{diffenderfer2021winning}  & \multicolumn{2}{c}{269M} & \multicolumn{2}{c|}{} & 4.53  & 59.26 & 37.80 & 396  & 519& 573 \\

        & ImageNet&  $10k$  & &   &  $12.5$ & \cite{krizhevsky2012imagenet}& \multicolumn{2}{c}{61M} & \multicolumn{2}{c|}{} & \cite{tian2022deeper} & \multicolumn{2}{c}{86M} &   \multicolumn{2}{c|}{}
        & 32.78  & 0.2s  & 60.25 & 376 & 373 & 401 \\

          \midrule
         & \multicolumn{3}{l|}{\scriptsize $(*)$ from DINOv2\cite{oquab2023dinov2} embedding}  &  &  & & {\scriptsize\cite{ruoss2020learning}}&  {\scriptsize \cite{khedr2023certifair}}  & {\scriptsize\cite{ruoss2020learning}}&   {\scriptsize\cite{khedr2023certifair}} & & {\scriptsize \cite{ruoss2020learning}}&  {\scriptsize \cite{khedr2023certifair}} & {\scriptsize\cite{ruoss2020learning}} &{ \scriptsize\cite{khedr2023certifair}}  &   BF  & $k$-d & BDD & BF  & $k$-d & BDD \\
         \midrule
            
        \multirow{3}{*}{\shortstack[1]{Ind. \\ Fair.}} &  German  &  $1k$  & $31$ &  \multirow{3}{*}{$L_{\infty}$} & \multirow{3}{*}{ $0.16$} &  
         \multirow{3}{*}{\shortstack[1]{ \cite{ruoss2020learning}  \\
          \cite{khedr2023certifair}}}   &  {\scriptsize 3.3k} &{\scriptsize 1.5k} & {\scriptsize 0.0} & {\scriptsize2.9}
        &  \multirow{3}{*}{\shortstack[1]{  \cite{ruoss2020learning}  \\
         \cite{khedr2023certifair}}} &  {\scriptsize 3.3k} &{\scriptsize1.5k }& {\scriptsize0.0} &{\scriptsize 0.0 }& 0.31 & 0.30  & 0.74  & 220 & 223  & 235 \\
        
        & Adult  &  $48.8k$  & $15$ &   &  & & {\scriptsize 5.1k} & {\scriptsize 1.1k} &  {\scriptsize0.1} &{\scriptsize 23.2}   & & {\scriptsize 5.1k} & {\scriptsize 1.1k} & {\scriptsize 0.0} &{\scriptsize 2.5}   & 0.69  & 1.21  & 5.38& 250  & 360 & 275 \\
        
             & COMPAS&  $6.2k$  & $18$ &   &  &  &{\scriptsize 1.5k} & {\scriptsize 0.7k} &{\scriptsize 1.8} &{\scriptsize 55.4}   &   & {\scriptsize 1.5k} &{\scriptsize 0.7k} & {\scriptsize0.2} & {\scriptsize 40.6 } & 0.31  & 0.49  & 4.50  & 223 & 235 & 236  \\

            \bottomrule
        %  & \multicolumn{3}{l|}{\scriptsize $(*)$ from DINOv2\cite{oquab2023dinov2} embedding}  &  &  & \cite{ruoss2020learning}& \cite{khedr2023certifair}  & \cite{ruoss2020learning}&   \cite{khedr2023certifair} & \cite{ruoss2020learning}& \cite{khedr2023certifair}  & \cite{ruoss2020learning} & \cite{khedr2023certifair}  &   BF  & $k$-d & BDD & BF  & $k$-d & BDD \\
        %  \midrule
            
        % \multirow{3}{*}{\shortstack[1]{Ind. \\ Fair.}} &  German  &  $1k$  & $31$ &  \multirow{3}{*}{$L_{\infty}$} & \multirow{3}{*}{ $0.16$} &  
        %  3300   &  1501 &  0.0 & 2.90
        % &  3300 &  1501 & 0.0 & 0.0 & 0.31 & 0.30  & 0.74  & 220 & 223  & 235 \\
        
        % & Adult  &  $48.8k$  & $15$ &   &  & 5145 & 1061 & 0.06 & 23.15   &5145 & 1061 & 0.0 & 2.49   & 0.69  & 1.21  & 5.38& 250  & 360 & 275 \\
        
        %      & COMPAS&  $6.2k$  & $18$ &   &  & 1455 & 741 & 1.80 &55.37   &  1455 & 741 & 0.19 & 40.59  & 0.31  & 0.49  & 4.50  & 223 & 235 & 236  \\

        %     \bottomrule
    \end{tabular}
    \caption{Experimental setup and performance summary for robustness monitoring applications. For each experimental setting we compare:
    the detected \ior violations for the \emph{base model} and the \emph{robust model}; the processing time per input and the total memory required by our monitor implemented with various FRNN algorithms.
    % \todo{Add the sizes of models. And average time and memory data.}
    }
    \label{tab:app_tpr}
\end{table*}

\subsection{Computational Performances of Monitors}
\label{subsec:performance}
It is expected that the average computation time of monitors will grow with respect to the length of decision sequences and the number of dimensions in the data, due to the increase in FRNN search complexities.
% Moreover, as the data dimensions increase, the average runtime also increases, since the FRNN search gets more involved.
We demonstrate these trends empirically.

\smallskip
\noindent\textbf{Length of decision sequences.}
We used the HIGGS data set~\cite{higgs_280} because of its large volume of  10.5 million entries.
For each entry we generated a synthetic output, and then used our different monitors to sequentially run over the 10.5 million decisions.
We repeated this experiment with 24 and 12 dimensions, and for different values of $\epsilon_Q\in \{0.01,0.025,0.05\}$.
% The monitor operates over a stream generated from the HIGGS dataset \cite{higgs_280} by adding synthetically generated decisions. 
% We monitor all $10.5$ million entries, once for $24$ dimensions, once for only $12$ dimensions with varying $\epsilon\in \{0.01, 0.025, 0.05\}$ and different FRNN algorithms. 
In Figure~\ref{fig:performance}, we report the rolling average processing time per input (with 100k window size) with respect to increasing length of the decision sequence. 
% We use periodic indexing with a short-term memory of $10$k for the $k$-d tree and the SNN. 
% In Fig.~\ref{fig:performance} 
We observe that BDD-based monitors are fastest for low dimensions and large values of $\epsilon_Q$, outperforming $k$-d trees with $L_{\infty}$-norm. 
This is surprising given the stellar performance of $k$-d trees for the $L_2$ norm. 
Both SNN and brute force perform reasonably well across all our experiments. 

Furthermore, the BDD-based algorithm shows a non-monotonic trend with respect to $\epsilon_Q$: As $\epsilon_Q$ increases, the number of partitions decreases, so the BDDs get smaller and more efficient. 
However, this introduces more false positives, requiring the lower-level brute-force routine to engage more frequently, causing a decrease in performance.
Intuitively, the BDD-based monitor performs well when the input data is sparse, so that the false positives are less frequent. This is expected for high-dimensional data, although higher dimension would increase the computational cost. 
This can be balanced by parallelizing with a just enough number of parallel workers, s.t. the data in each parallel FRNN remains sparse.

\smallskip
\noindent\textbf{Number of dimensions and parallel processing units.}
We augmented ImageNet data ~\cite{deng2009imagenet} with Gaussian noise and synthetic decisions, and ran our monitor on sequences of $10,000$ labeled samples with varying number of dimensions, obtained by sampling random pixels from the image.
% The monitor operates over a stream generated from the ImageNet dataset \cite{higgs_280} by adding Gaussian noise, to increase the number of entries, and synthetic decisions.
% The experiment is conducted by progressively decreasing the input dimension, by sampling random pixels from the image, and recoding the time required by the monitor to process $10$k inputs.
We compare various FRNN monitoring algorithms, both without parallelization and measuring computational time starting from an initial history of length $100$k, and with parallelization and measuring computational time from the beginning (see Algorithm~\ref{alg:parallel}).
In Figure~\ref{fig:performance}, we can observe that BDD-based monitors are the slowest as the dimensions increase, and $k$-d trees and SNN are somewhat comparable. 
Moreover, the plots show that parallelization drastically increases the viability of monitoring in high dimensions. The number of threads should be chosen as a function of the dimension, as we can observe that there exists a sweet spot in the trade-off between the dimensionality reduction and parallelization overhead. This behavior is especially pronounced for BDD-based monitoring, as was explained earlier.

\newcommand{\legendline}[1]{\includegraphics[width=0.033\linewidth]{figures/lines/#1.png}}

\begin{figure*}
    \centering
    \begin{tabular}{ccc | c}
        \includegraphics[width=0.235\linewidth]{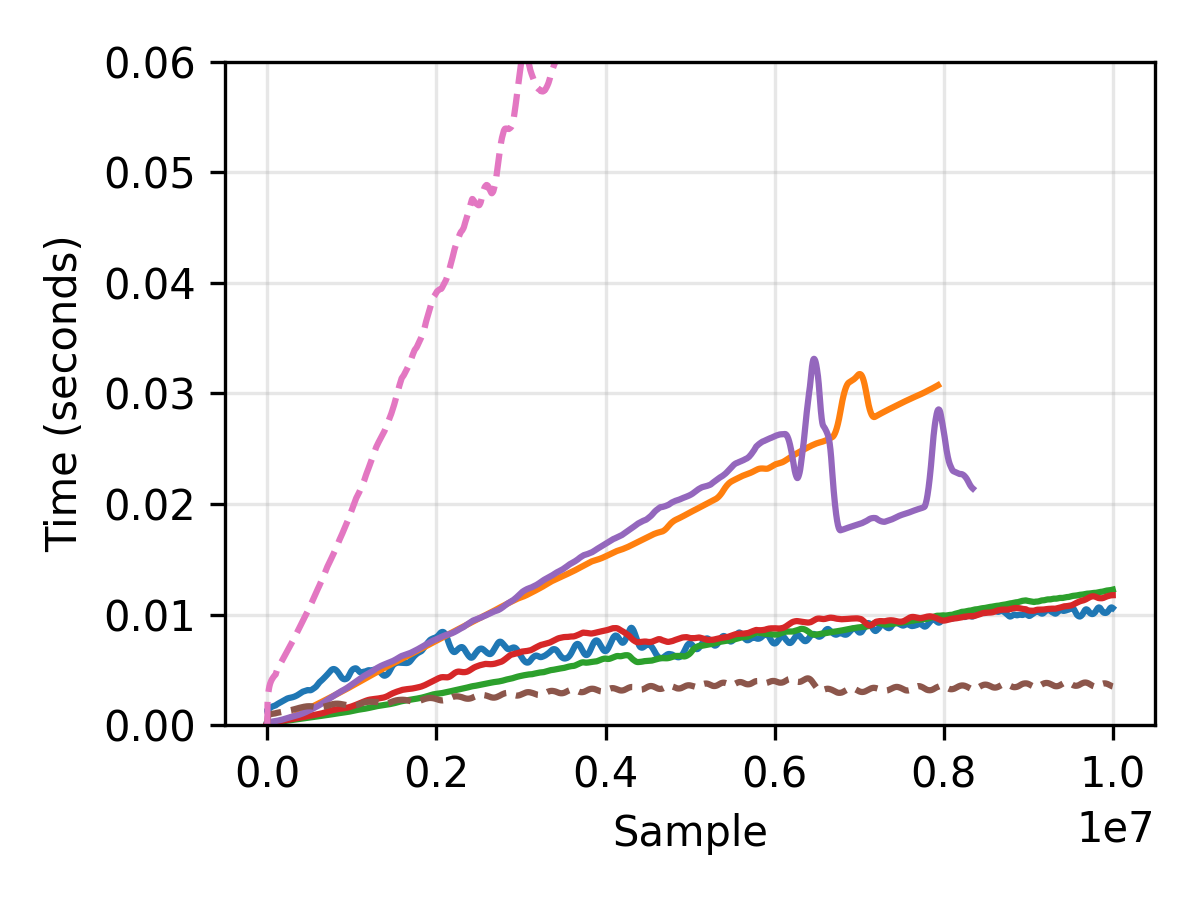} &
        \includegraphics[width=0.235\linewidth]{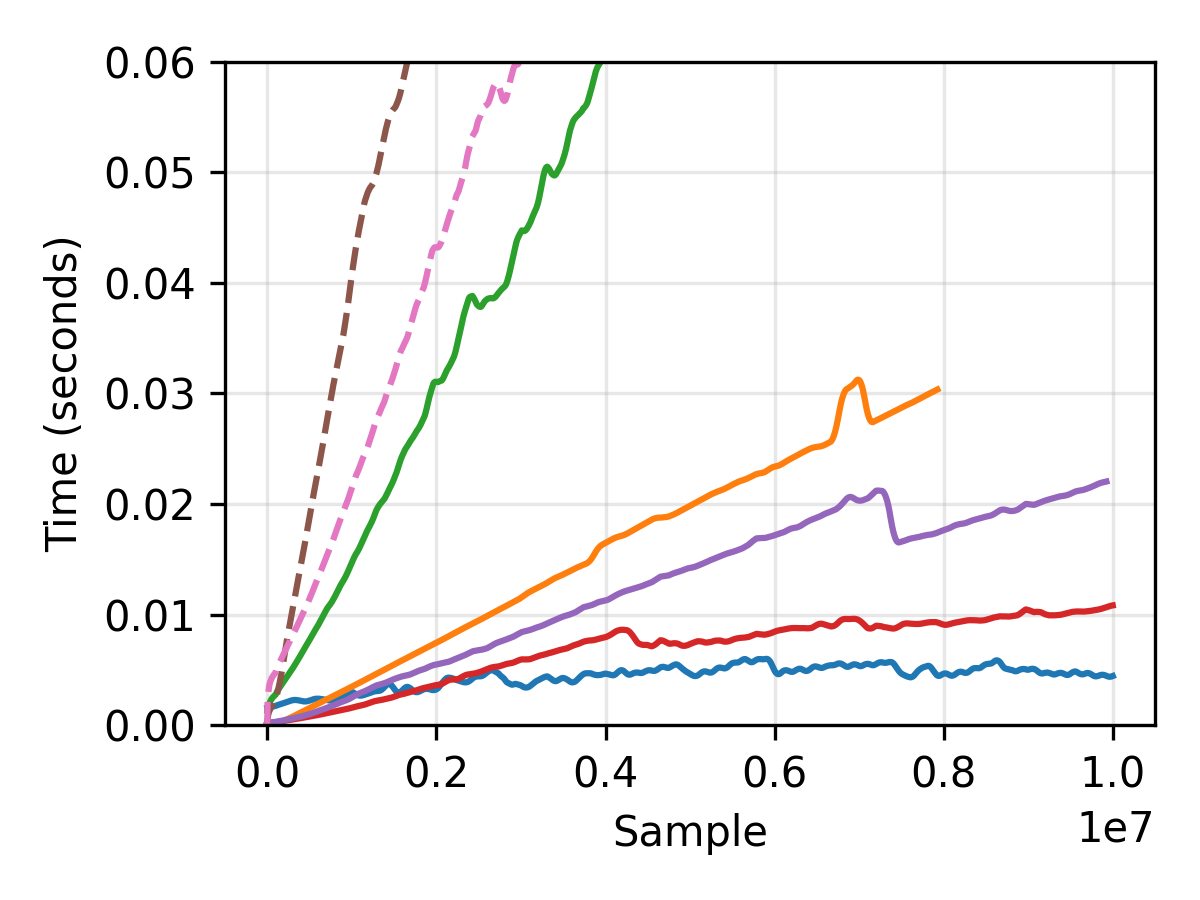} &
        \includegraphics[width=0.235\linewidth]{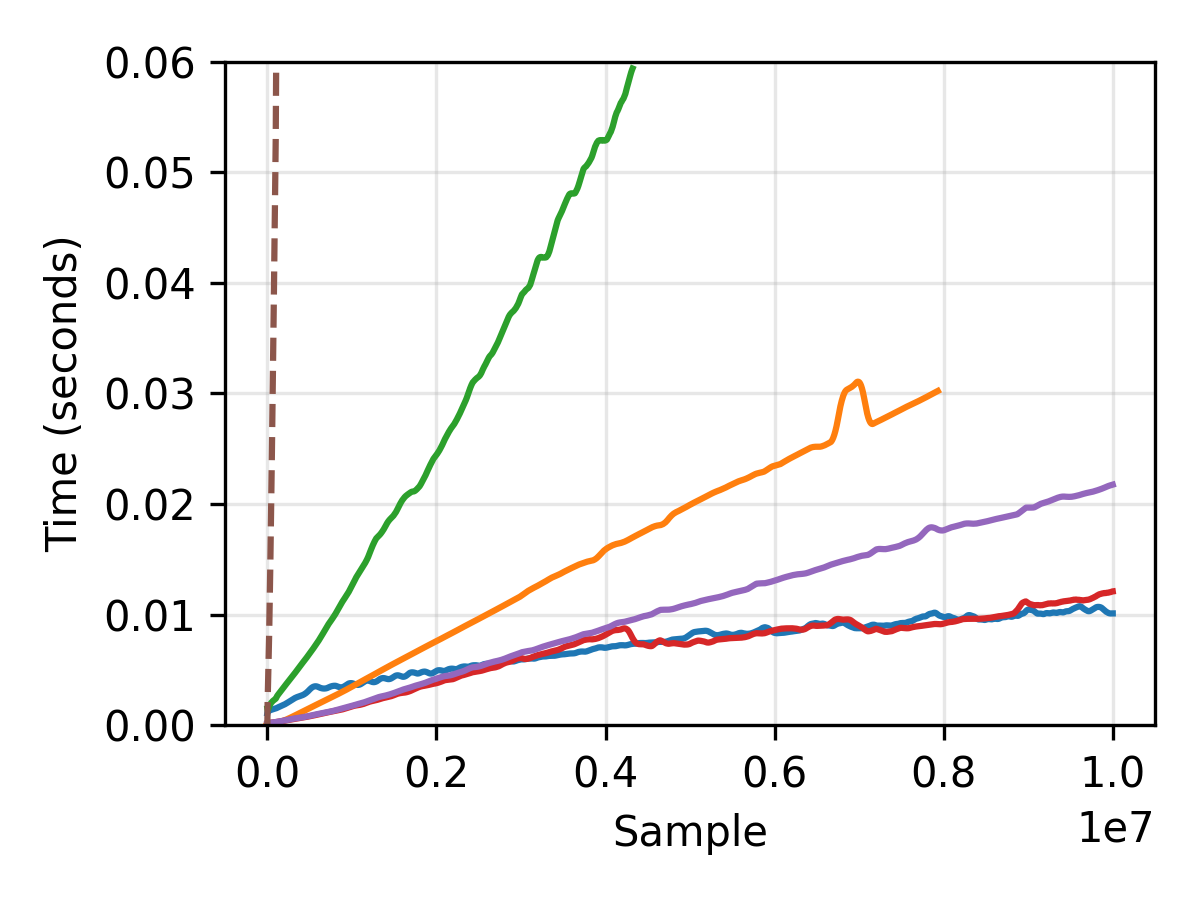} 
        & \includegraphics[width=0.235\linewidth]{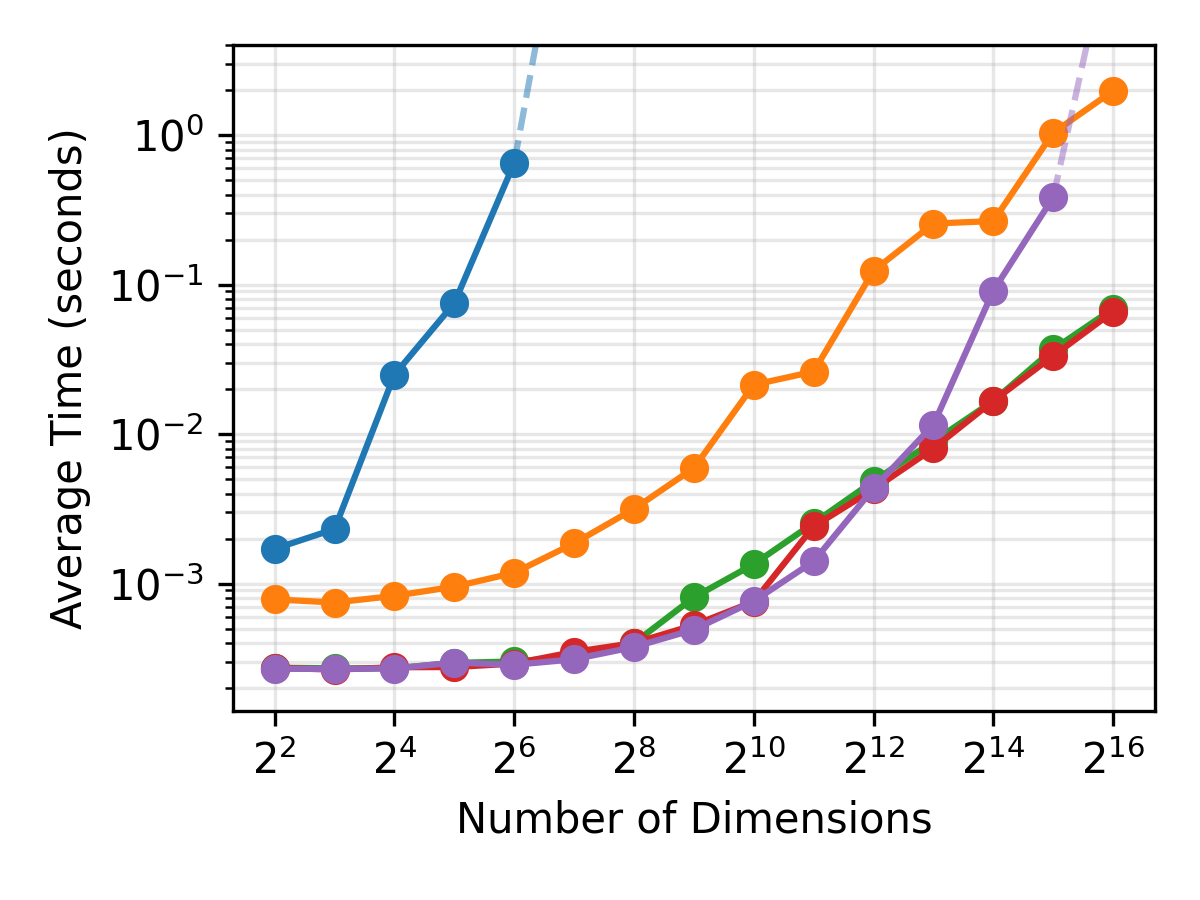}
        \\
        % (a) 12d, $\epsilon=0.01$ & (b) 12d, $\epsilon=0.025$ & (c) 12d, $\epsilon=0.05$ \\
        \includegraphics[width=0.235\linewidth]{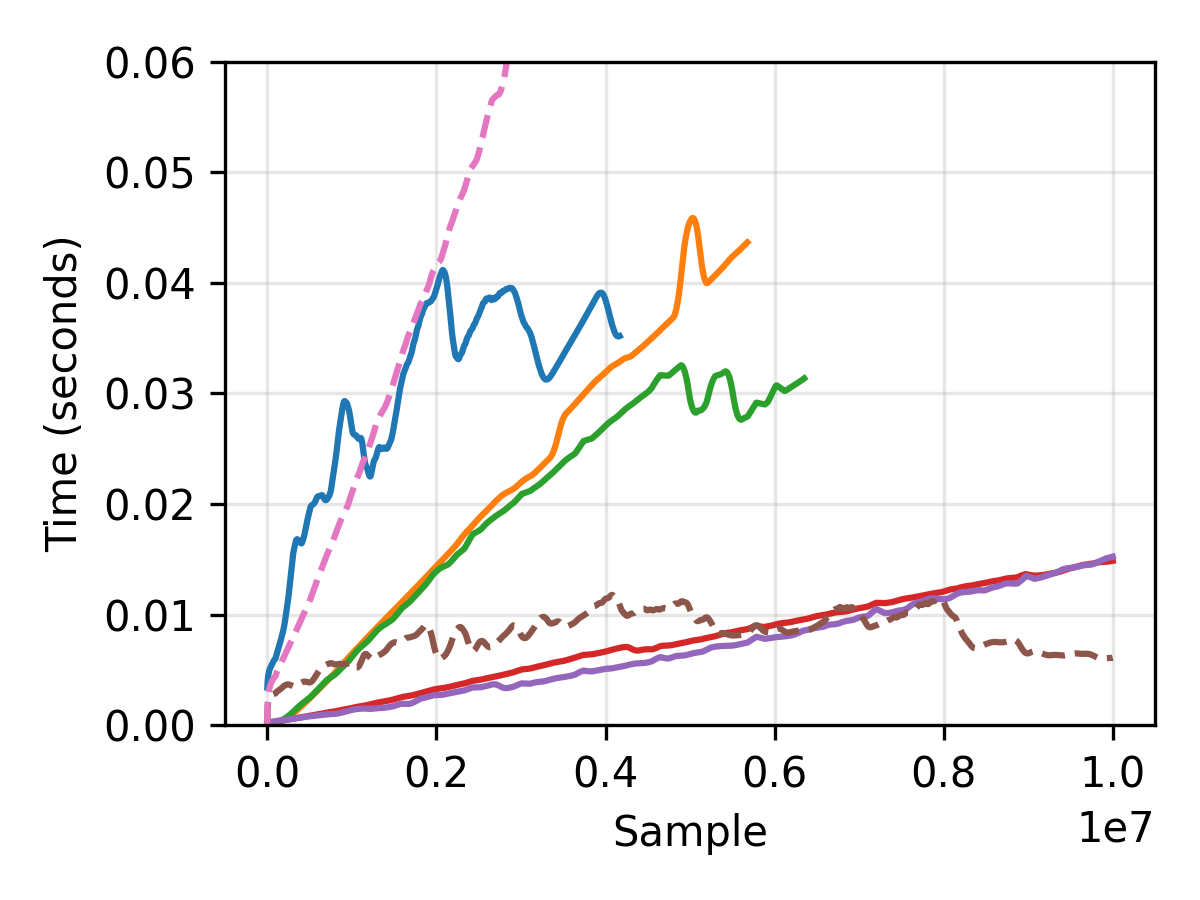} &
        \includegraphics[width=0.235\linewidth]{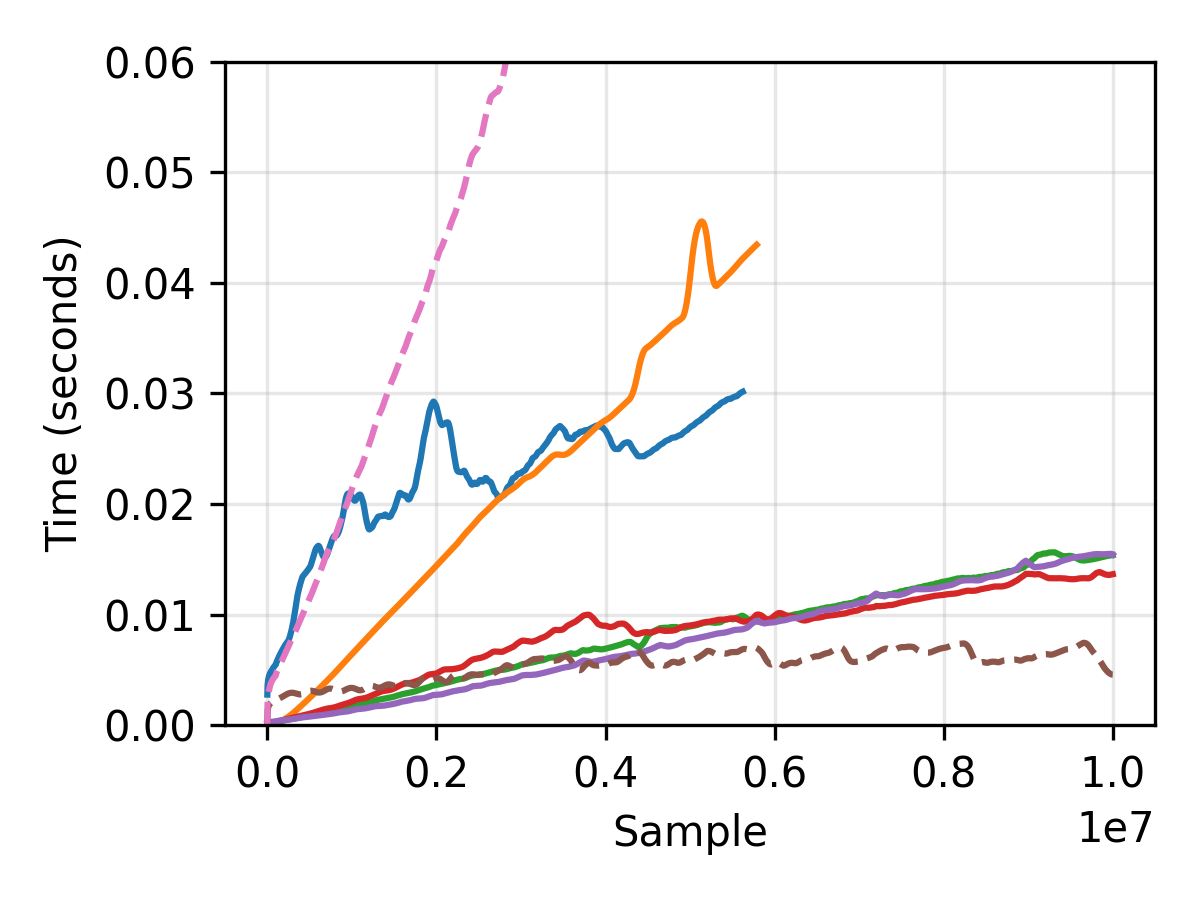} &
        \includegraphics[width=0.235\linewidth]{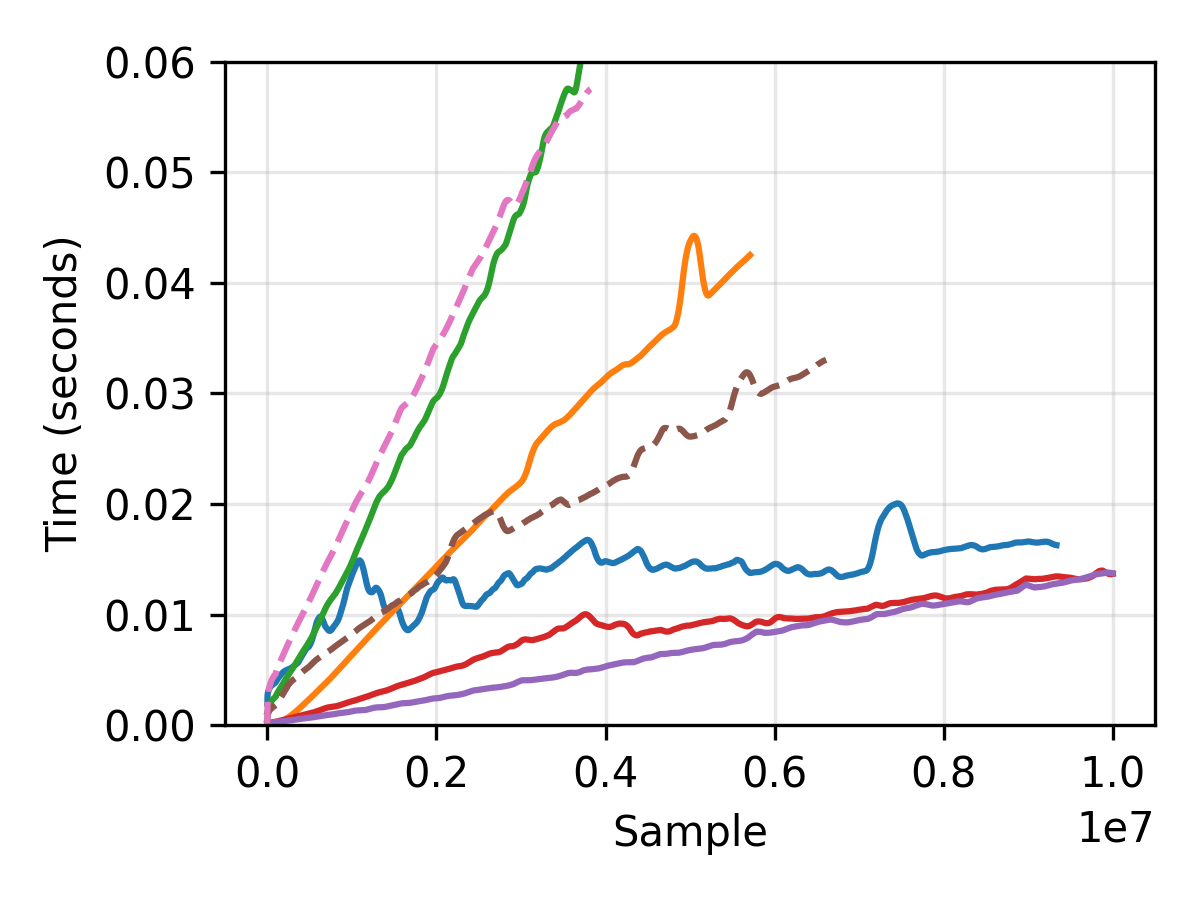} & 
        \includegraphics[width=0.235\linewidth]{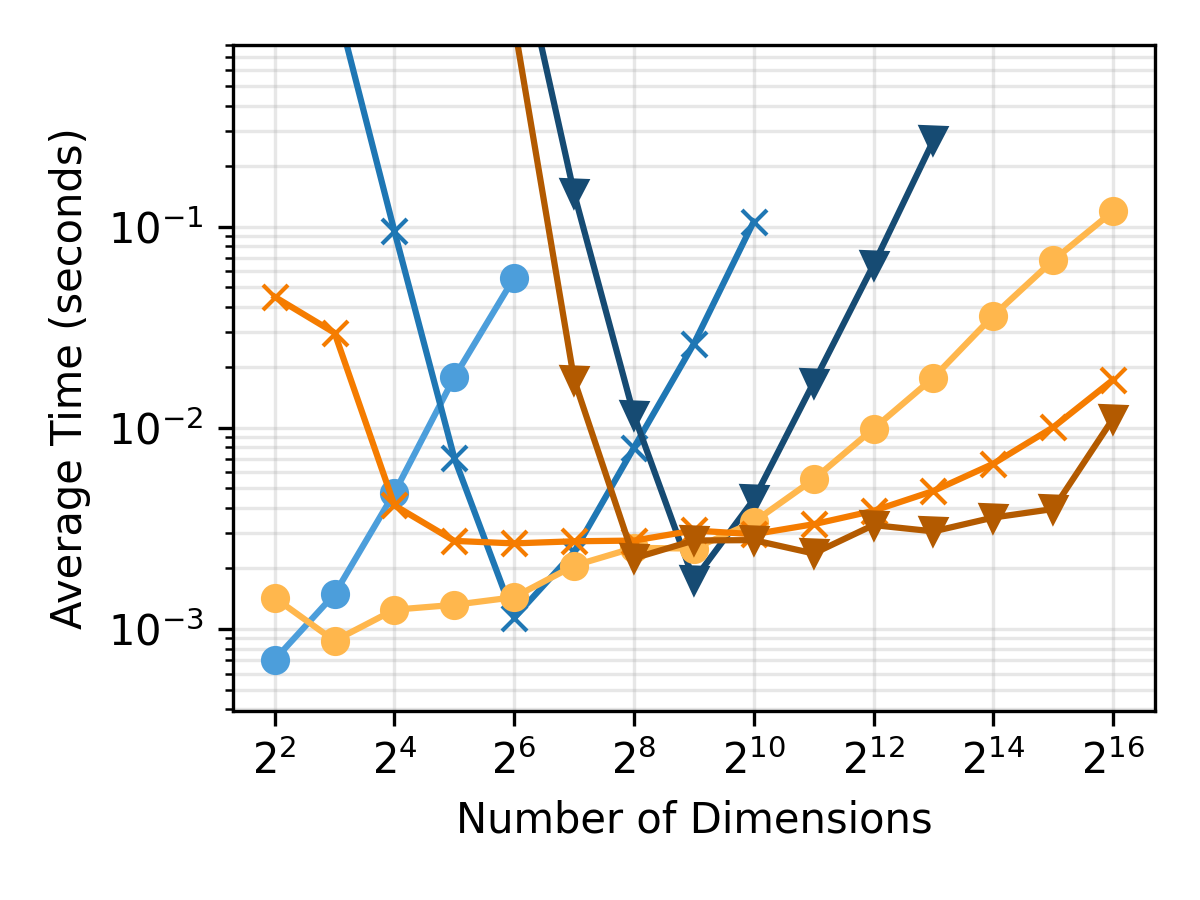}\\
        % (d) 24d, $\epsilon=0.01$ & (e) 24d, $\epsilon=0.025$ & (f) 24d, $\epsilon=0.05$ 
    \end{tabular}
    \caption{LEFT: Performance comparison on the HIGGS dataset with 10 million entries. The rows correspond to 12 and 24 dimensional inputs respectively. The columns correspond to a $\epsilon$ of $0.01$,  $0.025$, and $0.05$ respectively.
    Legend:  BDD (\legendline{padded_blueline}), Brute Force (\legendline{padded_orangeline}), Kd-tree $L_\infty$ (\legendline{padded_greenline}), Kd-tree $L_2$ (\legendline{padded_redline}), SNN (\legendline{padded_purpleline}), 2-threaded BDD (\legendline{padded_brownline}), 2-threaded Kd-tree (\legendline{padded_pinkline}).
    RIGHT: The plot shows average processing time for 10k images from ImageNet: without parallelization after pre-loading 100k images (top); with parallelization after pre-loading 0 images (bottom). Parallelization plot only: BDD at 1 thread (\legendline{blue1t}), BDD at 16 threads (\legendline{blue16t}), BDD at 96 threads (\legendline{blue96t}), Kd-tree at 1 thread (\legendline{orange1t}), Kd-tree at 16 threads (\legendline{orange16t}), Kd-tree at 96 threads (\legendline{orange96t})
}
    \label{fig:performance}
\end{figure*}

\section{Discusssions}

We propose \emph{runtime} \ior as a new variant of \ior properties that include adversarial robustness, semantic robustness, and individual fairness in one umbrella.
Runtime \ior requires the current run of a given AI decision maker be robust, and therefore is weaker than the traditional local or global \ior properties that require robustness to be satisfied even for inputs that may never appear in practice.
We propose \emph{monitors} for the detection of runtime \ior violations by deployed black-box AI models.
Our monitors build upon FRNN algorithms and use various optimizations, and their effectiveness and feasibility are demonstrated on real-world benchmarks.
% In our experiments, we demonstrate that they can detect robustness violations in small fractions of a second even for models up to $150,000$ features and for decision sequences of lengths ranging between $100,000$ to $10$ millions---a scale that is beyond the capacity of most existing robustness verification approaches. 

Several future directions exist.
Firstly, we will incorporate more advanced FRNN algorithms in our monitors, like the ones with dynamic indexing and approximate solutions.
Secondly, our robustness (semantic and adversarial) case studies are only on image data sets, but there are other possibilities.
We plan to build monitors for spam filters that would warn the user if different verdicts were made for semantically similar texts from the past.
Finally, we will address various engineering questions about the monitoring aspect, like buffering new inputs while computation of previous inputs is still running, and distributed monitoring for networks of AI models.

\begin{acks}
This work was supported in part by the ERC project ERC-2020-AdG 101020093 and the SBI Foundation Hub for Data Science \& Analytics, IIT Bombay.
\end{acks}

\bibliographystyle{ACM-Reference-Format}
\bibliography{references}

%%%%%%%%%%%%%%%%%%%%%%%%%%%%%%%%%%%%%%%%%%%%%%%%%%%%%%%%%%%%%%%%%%%%%%%%%%%%%%%
%%%%%%%%%%%%%%%%%%%%%%%%%%%%%%%%%%%%%%%%%%%%%%%%%%%%%%%%%%%%%%%%%%%%%%%%%%%%%%%
% APPENDIX
%%%%%%%%%%%%%%%%%%%%%%%%%%%%%%%%%%%%%%%%%%%%%%%%%%%%%%%%%%%%%%%%%%%%%%%%%%%%%%%
%%%%%%%%%%%%%%%%%%%%%%%%%%%%%%%%%%%%%%%%%%%%%%%%%%%%%%%%%%%%%%%%%%%%%%%%%%%%%%%
\newpage
\appendix
\onecolumn
% \section{Appendix}

\begin{center}
    \Huge
    \textbf{Appendices}
\end{center}

\section{Pseudocode of FRNN Monitoring Algorithm using BDDs}
\label{app:FRNN using BDD}

% \todo{Add some text explaining what's happening..}
\begin{algorithm}
    \caption{FRNN Monitoring Algorithm using BDDs}
    \label{alg:bdd monitor}
    \begin{algorithmic}[1]
        \Require Space $Q$, distance metric $d_Q$, constant $\epsilon_Q>0$
        % \Ensure
        \State BDD $f\gets \mathrm{bddZero}$ \Comment{initialize BDD for storing seen label vectors}
        \State BDD $f_d\gets \mathrm{bddZero}$ \Comment{initialize BDD for encoding the adjacency relation two label vectors}
        \State $\overline{Q}\gets \mathit{Discretize}(Q,d_Q,\epsilon_Q)$ \Comment{discretize the space $Q$ into boxes}
        \For{$\overline{q},\overline{q'}\in\overline{Q}$} \Comment{compute $f_d$ (one-time process)}
            \If{$\overline{q}$ and $\overline{q'}$ are adjacent}
                \State $f_d\gets f_d\lor f_{(\overline{q},\overline{q'})}$
            \EndIf
        \EndFor
        \While{$\mathrm{true}$}\Comment{monitoring begins}
            \State $q\gets \mathit{GetNewInput}()$ \Comment{$q$ is the next input point}
            \State BDD $g \gets f_d(\cdot,\overline{q})$ \Comment{$g$ represents the labels that are adjacent to $\overline{q}$ (the label of $q$)}
            \State BDD $h \gets f \land g$ \Comment{$h$ represents the labels that are in $g$ and also have appeared before}
            \If{$h=\mathrm{bddZero}$} 
                \State $R=\emptyset$ \Comment{no neighbor close to $q$ appeared in the past}
            \Else
                \If{$\left(h(b)=1 \Leftrightarrow b=\overline{q}\right)$}
                    \State $R\gets\Delta(\overline{q})$ \Comment{the past neighbors of $\overline{q}$ all had the label $\overline{q}$}
                \Else
                    \State $S \gets \mathit{getPoints}(h)$ \Comment{collect past neighboring labels (some can be false positives and are probably not coming from true neighbors of $q$)}
                    \State $R \gets \mathit{BruteForceFRNN}(\Delta(S),q;d_Q,\epsilon_Q)$ \Comment{low-level brute-force search to eliminate false positives from $S$}
                \EndIf
            \EndIf
            \State $f \gets f\lor f_{\overline{q}}$ \Comment{update the list of seen labels}
            \State $\Delta(\overline{q})\gets \Delta(\overline{q})\cup \{q\}$ \Comment{update the dictionary that maps seen labels to seen points}
            \State \textbf{output} $M(\rho,q)=R$ \Comment{the output of the monitor}
        \EndWhile
    \end{algorithmic}
\end{algorithm}

\section{Proof of Theorem~\ref{thm:runtime input-output robustness}}
\label{sec:proof}

\begin{theorem}%\label{thm:runtime input-output robustness}
    Suppose $\epsilon_X, \delta_Z>0$ are constants and $X$ is infinite. 
    \begin{enumerate}
        \item Every decision-sequence of every globally $(\epsilon_X, \delta_Z)$-\ior classifier is runtime $(\epsilon_X, \delta_Z)$-\ior.
        \item If the decision-sequence of a classifier is runtime $(\epsilon_X, \delta_Z)$-\ior, the classifier is not necessarily globally $(\epsilon_X, \delta_Z)$-\ior.
        % \item A runtime $(\epsilon_X, \delta_Z)$-\ior decision sequence may either belong to a classifier that is globally $(\epsilon_X, \delta_Z)$-\ior, or belong to a classifier that is not globally $(\epsilon_X, \delta_Z)$-\ior
    \end{enumerate}
\end{theorem}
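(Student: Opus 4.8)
The plan is to prove the two claims separately: Claim~(1) by a direct specialization of the universal quantifier in global \ior, and Claim~(2) by constructing an explicit separating example. The overall structure is that runtime \ior is just global \ior with its quantifier restricted to the finitely many inputs that actually appear, so one direction is immediate and only the converse requires a witness.

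For Claim~(1), I would start from the definition of global \ior, which quantifies the implication in~\eqref{eq:i/o robustness} over \emph{all} pairs $x,x'\in X$. Given an arbitrary decision sequence $(x_1,z_1)\ldots(x_n,z_n)$ of a globally $(\epsilon_X,\delta_Z)$-\ior classifier $D$, fix any indices $i,j\in[1;n]$. The inputs $x_i,x_j$ are in particular a pair of points of $X$, so instantiating global \ior at $x=x_i$, $x'=x_j$ gives $d_X(x_i,x_j)\le\epsilon_X \implies d_Z(D(x_i),D(x_j))\le\delta_Z$. Since the sequence is a decision sequence of $D$ we have $z_i=D(x_i)$ and $z_j=D(x_j)$, so this is exactly the runtime condition~\eqref{eq:runtime robustness} for the pair $(i,j)$. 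As $i,j$ were arbitrary, the sequence is runtime $(\epsilon_X,\delta_Z)$-\ior. This step is routine.

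For Claim~(2), I would exhibit a classifier that is not globally \ior together with one of its decision sequences that \emph{is} runtime \ior. Take $X=\RN$ with the usual metric and an output space containing two labels $z_0,z_1$ with $d_Z(z_0,z_1)>\delta_Z$, and define the threshold classifier $D(x)=z_0$ for $x<0$ and $D(x)=z_1$ for $x\ge 0$. Choosing $a=-\epsilon_X/4$ and $b=\epsilon_X/4$ yields $d_X(a,b)=\epsilon_X/2\le\epsilon_X$ but $d_Z(D(a),D(b))=d_Z(z_0,z_1)>\delta_Z$, so $D$ violates global \ior. Now take as the decision sequence a single decision, say $(1,z_1)$, or any finite sequence of inputs lying strictly on one side of the boundary: every pair $(i,j)$ that can be formed then has $d_Z(z_i,z_j)=0\le\delta_Z$, so the implication in~\eqref{eq:runtime robustness} holds for every pair and the sequence is runtime \ior. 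Hence runtime \ior of a sequence does not force global \ior of the classifier.

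The role of the hypothesis that $X$ is infinite is what I expect to require the most care to state cleanly. I would use it to argue that the separation is \emph{unavoidable} rather than accidental: any decision sequence mentions only finitely many distinct inputs, so in an infinite space there always remain points on which the classifier can be (re)defined to break global \ior while leaving the observed sequence untouched. The one genuine subtlety is that the construction needs a pair of distinct inputs within distance $\epsilon_X$ (to host the violation) and a pair of outputs at distance $>\delta_Z$ (so that robustness is not vacuous); both hold automatically in the real-coordinate spaces that are the intended setting, and I would state them explicitly as mild standing assumptions on $(X,d_X)$ and $(Z,d_Z)$ rather than leaving them implicit.
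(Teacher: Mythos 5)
Your proof is correct, and Claim~(1) matches the paper's argument exactly (instantiating the universal quantifier of global \ior at the finitely many observed inputs). For Claim~(2) you take a genuinely different, more concrete route: you exhibit a fixed threshold classifier on $\RN$ that violates global \ior together with a one-sided decision sequence that is runtime \ior, whereas the paper argues generically---given \emph{any} runtime \ior sequence of \emph{any} classifier $D$, it picks a pair $x,x'$ with $d_X(x,x')\le\epsilon_X$ outside the finitely many observed inputs (this is where ``$X$ infinite'' is used) and redefines $D$ on that pair to force $d_Z(D'(x),D'(x'))>\delta_Z$, producing a non-globally-\ior classifier consistent with the same sequence. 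Your explicit example suffices for the theorem as stated, since ``not necessarily'' only demands one witness, and it has the virtue of being fully self-contained; the paper's modification argument proves the stronger fact that \emph{every} runtime \ior sequence is generated by some non-globally-\ior classifier, which is the content your final paragraph correctly reconstructs as the role of the infiniteness hypothesis. Worth noting in your favor: the standing assumptions you make explicit---that $X$ contains two distinct points within distance $\epsilon_X$ and that $Z$ contains outputs more than $\delta_Z$ apart---are genuinely needed (an infinite metric space all of whose pairwise distances exceed $\epsilon_X$ would defeat the paper's construction, and a $Z$ of small diameter would make the redefinition impossible), and the paper's proof leaves both implicit; your version is in this respect tighter than the original.
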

\begin{proof}
    Claim (1): Let $D$ be an $(\epsilon_X, \delta_Z)$-\ior classifier and let $x_1, \dots, x_n \in X^n$ be a sequence of inputs. We know that \ior is satisfied for every two states in $X$. Hence, it must also be satisfied for $\{x_1, \dots, x_n\}\subseteq X$. 
    Claim (2): Let $\rho=(x_1, z_1), \dots, (x_n,z_n) \in X^n$ be a runtime $(\epsilon_X, \delta_Z)$-\ior decision sequence. Let $D$ be a classifier generating $\rho$. Let $x,x'\in X$ such that $d_x(x,x')\leq \epsilon_X \setminus \{x_1, \dots, x_n\}$. We define $D'$ identical to $D$, but for $x$ and $x'$ where we set $d_Z(D(x), D(x'))\geq \delta_Z$, i.e., $D'$ is not globally $(\epsilon_X, \delta_Z)$-\ior. 
    % Let $x_1, \dots, x_n$ be a decision sequence. If the classifier $D$ satisfies global robustness, then it 
    % follows directly from the definition of global robustness, i.e., if input-output robustness holds for every state in $X$ then it must hold for the subset $X'$ 
\end{proof}

\newpage
\section{Counterexample pairs in semantic robustness monitoring}\label{sec:appendix:counter-examples}

\begin{figure}[htb]
    \centering
    \includegraphics[width=0.8\textwidth]{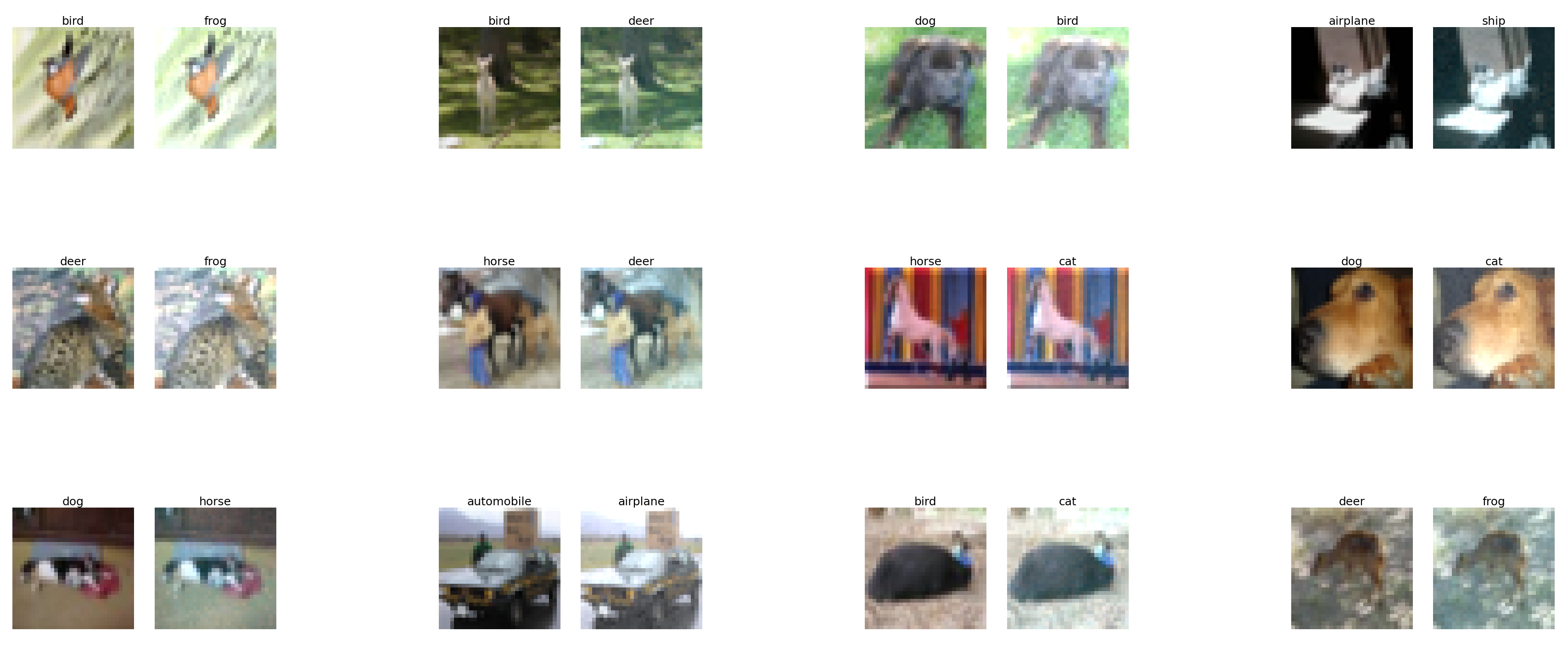}
    \caption{Sample of flagged inputs for semantic robustness on CIFAR-10C, Frost corruption, severity 2, baseline model \cite{croce2020robustbench}.}
    \label{fig:standard-frost}
\end{figure}

\begin{figure}[htb]
    \centering
    \includegraphics[width=0.8\textwidth]{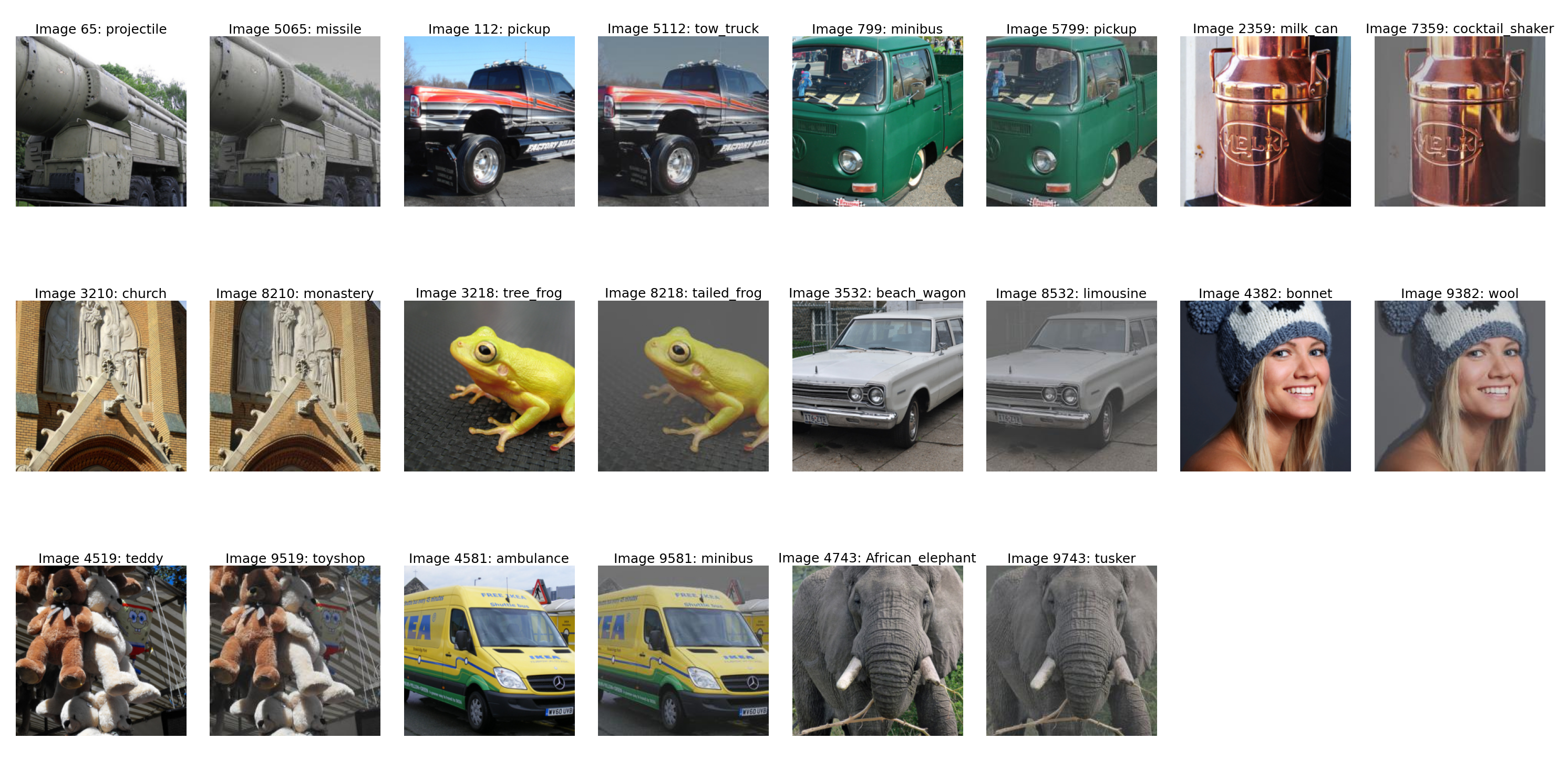}
    \caption{Flagged inputs for semantic robustness on Imagenet-3DCC, Fog corruption, severity 2, top model \cite{tian2022deeper}.}
    \label{fig:tian-deit-2}
\end{figure}

%\begin{figure}[htb]
%    \centering
%    \includegraphics[width=0.8\textwidth]{}
%    \caption{Sample of flagged inputs for semantic robustness on Imagenet-3DCC, Flash corruption, severity 1, top model.}
%    \label{fig:tian-deit-1}
%\end{figure}
%
%\begin{figure}[htb]
%    \centering
%    \includegraphics[width=0.8\textwidth]{}
%    \caption{Sample of flagged inputs for semantic robustness on Imagenet-3DCC, Near Focus corruption, severity 5, top model.}
%    \label{fig:tian-deit-3}
%\end{figure}

% \input{old}
%%%%%%%%%%%%%%%%%%%%%%%%%%%%%%%%%%%%%%%%%%%%%%%%%%%%%%%%%%%%%%%%%%%%%%%%%%%%%%%
%%%%%%%%%%%%%%%%%%%%%%%%%%%%%%%%%%%%%%%%%%%%%%%%%%%%%%%%%%%%%%%%%%%%%%%%%%%%%%%
% \input{dump}

\end{document}